\numberwithin{equation}{section}
\newcommand{\cF}{\mathcal{F}}
\newcommand{\cO}{\mathcal{O}}
\newcommand{\cS}{\mathcal{S}}
\newcommand{\cX}{\mathcal{X}}
\newcommand{\RR}{\mathbb{R}}
\newcommand{\bL}{\mathbf{L}}
\newcommand{\bK}{\mathbf{K}}
\newcommand{\bG}{\mathbf{G}}
\renewcommand{\epsilon}{\varepsilon}
\renewcommand{\phi}{\varphi}
\newcommand{\Var}[2]{\operatorname{\mathbb{V}ar} \ifstrempty{#1}{}{_{#1}} \ifstrempty{#2}{}{\left[#2\right]}}
\newcommand{\de}{\mathrm{d}}
\newcommand{\T}{^\top}  
\newcommand{\TR}{\mathrm{Tr}}
\newcommand{\diag}{\mathrm{Diag}}
\newcommand{\Op}{\mathrm{op}}
\newcommand{\sym}{\mathrm{sym}}
\newcommand{\HS}{\mathrm{HS}}
\newtheorem{theorem}{Theorem}[]
\newtheorem{remark}{Remark}[theorem]
\newtheorem{lemma}{Lemma}
\theoremstyle{definition} 
\newtheorem{definition}{Definition} 
\newtheorem{example}{Example}
\newcommand{\query}{f}
\newcommand{\qset}{\mathcal{F}}
\newcommand{\loss}[1]{L \ifstrempty{#1}{}{(#1)}}
\newcommand{\estloss}[2]{{L}_{#1} \ifstrempty{#2}{}{(#2)}}
\title[]{Small coresets via negative dependence: \\ DPPs, linear statistics, and concentration}
\author[R. Bardenet]{R\'emi Bardenet}
\address{R\'emi Bardenet, \ Univ. Lille, CNRS, Centrale Lille,
UMR 9189 -- CRIStAL,
F-59000 Lille, France}
\email{remi.bardenet@cnrs.fr}
\author[S. Ghosh]{Subhroshekhar Ghosh}
\address{Subhroshekhar Ghosh, \ Department of Mathematics, \ National University of Singapore, 10 Lower Kent Ridge Road, 119076, Singapore}
\email{subhrowork@gmail.com}
\author[H. Simon]{Hugo Simon-Onfroy }
\address{Hugo Simon-Onfroy, \ Université Paris-Saclay, CEA, Irfu
  Département de Physique des Particules,
  91191, Gif-sur-Yvette, France}
\email{hugo.simon@cea.fr}
\author[H.-S. Tran]{Hoang-Son Tran$^{*}$}
\address{Hoang-Son Tran, \ Department of Mathematics, \ National University of Singapore, 10 Lower Kent Ridge Road, 119076, Singapore}
\email{hoangson.tran@u.nus.edu}
\begin{document}


\begin{abstract}
  Determinantal point processes (DPPs) are random configurations of points with tunable negative dependence. 
Because sampling is tractable, DPPs are natural candidates for subsampling tasks, such as minibatch selection or coreset construction. 
A \emph{coreset} is a subset of a (large) training set, such that minimizing an empirical loss averaged over the coreset is a controlled replacement for the intractable minimization of the original empirical loss.
Typically, the control takes the form of a guarantee that the average loss over the coreset approximates the total loss uniformly across the parameter space.
Recent work has provided significant empirical support in favor of using DPPs to build randomized coresets, coupled with interesting theoretical results that are suggestive but leave some key questions unanswered.
In particular, the central question of whether the cardinality of a DPP-based coreset is fundamentally smaller than one based on independent sampling remained open.
In this paper, we answer this question in the affirmative, demonstrating that \emph{DPPs can provably outperform independently drawn coresets}. 
In this vein, we contribute a conceptual understanding of coreset loss as a \emph{linear statistic} of the (random) coreset. 
We leverage this structural observation to connect the coresets problem to a more general problem of concentration phenomena for linear statistics of DPPs, wherein we obtain \emph{effective concentration inequalities that extend well-beyond the state-of-the-art}, encompassing general non-projection, even non-symmetric kernels. 
The latter have been recently shown to be of interest in machine learning beyond coresets, but come with a limited theoretical toolbox, to the extension of which our result contributes. Finally, we are also able to address the coresets problem for vector-valued objective functions, a novelty in the coresets literature. 
  \end{abstract}

\footnotetext{The authors are listed in alphabetical order by their surnames}
\def\thefootnote{*}\footnotetext{Corresponding author}
\medskip

\maketitle

\tableofcontents





  \section{Introduction}
  \label{s:introduction}
  Let $\mathcal{X}=\begin{Bmatrix}
x_{i} \mid i\in [1,n]
\end{Bmatrix}$ be a set of $n$ points in a Euclidean space, called the \emph{data set}.
Let $\qset$ be a set of nonnegative 
functions on $\mathcal{X}$, called \emph{queries}. 
Many classical learning problems, supervised or unsupervised, are formulated as finding a query $\query^*$ in $\qset$ that minimizes an additive loss function of the form
\begin{equation}
    \label{e:loss}
    \loss{\query}:=\sum_{x \in \mathcal{X}} \mu(x) \query (x),
\end{equation}
where $\mu:\mathcal{X}\rightarrow \mathbb{R}_+$ is a weight function.

\begin{example}[$k$-means]  
    \label{eg:k-means}
    For $\mathcal{X}\subset\RR^d$ and $k\in\mathbb{N}$, the goal of $k$-means clustering is to find a set $\mathcal{C}^*$ of $k$ ``cluster centers" by minimizing \eqref{e:loss}
    over
    \begin{equation*}
        \qset = \left\{\query_{\mathcal{C}} : x \mapsto \min_{q \in \mathcal{C} } \lVert x - q \rVert^2_2 ~\mid~ \mathcal{C}\subset \RR^d, \vert\mathcal{C}\vert=k\right\}.
    \end{equation*}
    Here, each query $\query$ is indexed by a set of $k$ cluster centers, and the loss \eqref{e:loss} is the quantization error.
\end{example}

\begin{example}[linear regression] \label{eg:li-re}
    When $\mathcal{X} =\begin{Bmatrix}x_i :=(y_{i}, z_i) \mid i\in [1,n]\end{Bmatrix}\subset\RR^{d+1}$,
    linear regression corresponds to minimizing \eqref{e:loss} over
    \begin{equation*}
        \qset = \left\{(y,z) \mapsto (a\T y + b -z)^2 \mid a \in \RR^d, b \in \RR\right\}.
    \end{equation*}
    Penalty terms can be added to each function, to cover e.g. ridge or lasso regression.
\end{example}
    
In many machine learning applications, the complexity of the corresponding optimization problem grows with the cardinality $n$ of the dataset.
When $n\gg 1$ makes optimization intractable, one is tempted to reduce the amount of data, using only a tractable number of representative samples. 
This is the idea formalized by \emph{coresets}; we refer to \cite{bachem2017coresetML} for a survey, and to \cite{coresetclustering, coresetkmeans} for specific coreset constructions for $k$-means and Euclidean clustering.
An $\epsilon$-coreset is a subset $\mathcal{S}\subset \mathcal{X}$, possibly with corresponding weights $\omega(x)$, $x\in\mathcal{S}$, such that 
\begin{equation}
    \label{e:linear_statistic}
    L_\mathcal{S} (f) := \sum_{x\in\mathcal{S}} \omega(x)f(x)
\end{equation}
is within $\epsilon$ of $L(f)$, uniformly in $f\in\mathcal{F}$. 
If the cardinality $m$ of $\mathcal{S}$ is significantly smaller than the intractable size $n$ of the original data set, one has reduced the complexity of the algorithm at a little cost in accuracy.

Many \emph{randomized} coreset constructions, where such guarantees are shown to hold with large probability, are built by drawing elements \emph{independently} from the data set $\mathcal{X}$ \cite[Chapter 3]{bachem2017coresetML}. 
Because a \emph{representative} coreset should intuitively be made of \emph{diverse} data points, \emph{negative dependence} between the coreset elements has been proposed as an effective possibility to improve their performance \cite{tremblay2019determinantal}.
In particular, the authors advocate the use of Determinantal Point Processes (DPPs),
a family of probability distributions over subsets of $\mathcal{X}$ parametrized by an $n\times n$ kernel matrix $\mathbf{K}$ that enforces diversity, all of this while coming with a polynomial-time exact sampling algorithm.

\cite{tremblay2019determinantal} give extensive theoretical and empirical justification for the use of DPPs in randomized coreset construction.
In one of their key results, using concentration results in \cite{pemantle2011rayleighconcentration}, \cite{tremblay2019determinantal} bound the cardinality of a DPP-based $\epsilon$-coreset, and their bound is $\mathcal{O}(\epsilon^{-2})$. However, it is known that the best $\epsilon$-coresets built with independent samples are also of cardinality $\mathcal{O}(\epsilon^{-2})$.
Thus, the crucial question of whether DPP-based coresets can provide a strict improvement remained to be settled; given the computational simplicity of independent schemes, this would be fundamental to justify the deployment of DPP-based methods. 

In this paper, we settle this question in the affirmative, demonstrating that for carefully chosen kernels, DPP-based coresets provably yield significantly better accuracy guarantees than independent schemes; equivalently, to achieve similar accuracy it suffices to use significantly smaller coresets via DPPs. 
In particular, we will show that DPP-based coresets actually can achieve cardinality $m=\mathcal{O}(\epsilon^{-2/(1+\delta)})$.
The quantity $\delta$ depends on the variance of the subsampled loss under the considered DPP, and some DPPs yields $\delta>0$.
A cornerstone of our approach is a structural understanding of the coreset loss \eqref{e:linear_statistic} as a so-called \emph{linear statistic} of the random point set $\mathcal{S}$, which enables us to go beyond earlier results that were based on concentration properties of general Lipschitz functions of a DPP \cite{pemantle2011rayleighconcentration}. 

In this endeavour, we obtain very widely-applicable concentration inequalities for linear statistics of DPPs compared to the state of the art; cf. \cite{BrDu14} that mostly focuses on scalar-valued statistics for finite rank ensembles on $\mathbb{R}$. In particular, we are able to address all DPPs that have appeared so far in the ML literature. Specifically, our results are able to handle \textit{non-symmetric kernels} and \textit{vector-valued} linear statistics. 

DPPs with non-symmetric kernels have recently been shown to be of significant interest in machine learning, such as recommendation systems \cite{gartrell2019learning,gartrell2020scalable,han2022scalable}, but they come with a limited theoretical toolbox, to which this paper makes a contribution. On the other hand, vector-valued statistics arise naturally in many learning problems, including coreset settings such as the gradient estimator in Stochastic Gradient Descent \cite{bardenet2021sgddpp}. However, the literature on coresets for vector-valued statistics is scarce, and in this paper we inaugurate their study with effective approximation guarantees via DPPs.

The rest of the paper is organized as follows.
Section~\ref{s:background} contains background on DPPs and coresets. 
Section~\ref{s:theoretical} contains our contributions. 
Section~\ref{s:experiments} provides numerical illustrations.
Section~\ref{s:discussion} contains a discussion on limitations and future work.

  \section{Background}
  \label{s:background}  
  We introduce here the two key notions of determinantal point process and coreset, and observe that a coreset guarantee is a uniform control over specific linear statistics of a point process.

\subsection{Determinantal point processes.}
A point process $\mathcal S$ on a Polish space $\mathcal X$ is a random locally finite subset of $\mathcal X$. 
Given a reference measure $\mu$ on $\mathcal X$ (e.g., the Lebesgue measure if $\mathcal X = \mathbb R^d$ or the counting measure if $\mathcal X$ is discrete), a point process $\mathcal S$ is called a DPP (w.r.t. $\mathcal \mu$) if there exists a measurable function $K : \mathcal X \times \mathcal X \rightarrow \mathbb C$ such that
\begin{equation}
    \label{e:definition_DPPs}
    \mathbb E \Big [ \sum_{\neq} f(x_{i_1},\ldots,x_{i_k}) \Big] = \int_{\mathcal X^k} f(x_1,\ldots,x_k) \det [K(x_i,x_j)]_{k\times k} \, \de \mu^{\otimes k} (x_1,\ldots, x_k),
\end{equation}
where the sum in the LHS ranges over all pairwise distinct $k$-tuples of the random locally finite subset $\mathcal S$, for all bounded measurable $f:\mathcal X^k \rightarrow \mathbb R$ and for all $k \in \mathbb N$. 
Such a function $K$ is called a \textit{kernel} for the DPP $\mathcal S$, and $\mu$ is called the background measure. 

When the ground set $\mathcal X$ is of finite cardinality $n$, an equivalent but more intuitive way to define DPPs is as follows: a random subset $\mathcal S$ of $\mathcal X$ is called a DPP if there exists an $n\times n$-matrix $\bK$ such that
\[ \mathbb P(T \subset \mathcal S) = \det [\bK_T], \quad \forall \: T\subset \mathcal X,\]
where $\bK_T$ denotes the submatrix of $\bK$ with rows and columns indexed by $T$.

In a similar vein to Gaussian processes, all the statistical properties of a DPP are encoded in this kernel function $K$ and background measure $\mu$.
A feature of DPPs with far-reaching implications for machine learning is that sampling and inference with DPPs are tractable. 
We refer the reader to \cite{hough2006_hkpv,kulesza2012_dpp_for_ml} for general references. Originally introduced in electronic optics \cite{Mac75}, they have been turned into generic statistical models for repulsion in spatial statistics \cite{LaMoRu14,biscio2017contrast} and machine learning \cite{kulesza2012_dpp_for_ml,belhadji2020determinantal,brunel2018learning,determinantal-averaging,derezinski2020bayesian,gartrell2019learning,ghosh2020gaussian}.

\begin{example}[$L$-ensemble and m-DPP]
    Let $\mathcal X$ be a finite set of cardinality $n$, $\mu$ be the counting measure, and $\bL$ be a positive semi-definite $n\times n$-matrix. 
    The $L$-ensemble with parameter $\bL$ is the point process $\mathcal S$ on $\mathcal X$ such that, for all $T\subset \mathcal X$,
        $\mathbb P(\mathcal S = T) \propto \det[\bL_T]$,
    where $\bL_T$ is the square submatrix of $\bL$ corresponding to the rows and columns indexed by the subset $T$. 
    It can be shown that $\mathcal S$ is a DPP on $\mathcal X$ with kernel $\bK:= \bL(\mathbf{I}+\bL)^{-1}$.
    In general, the cardinality of $\mathcal S$ is a random variable. 
By conditioning on the event $\{|\mathcal S| =m \}$, we obtain the so-called $m$-DPPs \cite{kulesza2012_dpp_for_ml}. 
\end{example}

\begin{example}[Multivariate OPE; \cite{bardenet2020mcdpp}]
    \label{eg:mope}
    Let $\mathcal X = \mathbb R^d$ and $\mu$ be a measure on $\mathbb R^d$ having all moments finite, let $(p_k)_{k \in \mathbb N^d}$ be the orthonormal sequence resulting from applying the Gram-Schmidt procedure to the monomials $x_1^{k_1} \ldots x_d^{k_d}$, taken in the graded lexical order. 
    The kernel $K^{(m)}_\mu(x,y) := \sum_{k=0}^{m-1} p_k(x)p_k(y)$
    then defines a projection DPP on $\mathbb R^d$, called the multivariate Orthogonal Polynomial Ensemble (OPE) of rank $m$ and reference measure $\mu$.
\end{example}

Multivariate OPEs were used in \cite{bardenet2020mcdpp} as nodes for numerical integration, leading to a Monte Carlo estimator with mean squared error decaying in $m^{-1-1/d}$, faster than under independent sampling. In \cite{bardenet2021sgddpp}, the authors investigated the problem of DPP-based
 minibatch sampling for Stochastic Gradient Descent (SGD), and exploited a delicate interplay between a finite dataset and its ambient data distribution to leverage this fast decay for improved approximation guarantees. In particular, they proposed the following DPP defined on a (large) finite ground set.

\begin{example}[Discretized multivariate OPE; \cite{bardenet2021sgddpp}] 
    \label{eg:remi-subhro}
    Let $n\in\mathbb{N}$ and $\mathcal X = \{x_1,\ldots, x_n\}\subset [-1,1]^d$.
    Let $q(x)dx$ be a probability measure on $[-1,1]^d$. 
    Let $K_q^{(m)}$ be the multivariate OPE kernel of rank $m$ with reference measure $q(x)dx$, as defined in Example \ref{eg:mope}. 
    Let $\tilde \gamma:[-1,1]^d\rightarrow \mathbb{R}_+$ be a function, assumed to be positive on $\mathcal{X}$, and consider
    $$
        K^{(m)}_{q,\tilde \gamma} (x,y) := \sqrt{\frac{q(x)}{\tilde \gamma (x)}} K^{(m)}_q(x,y) \sqrt{\frac{q(y)}{\tilde\gamma(y)}}, \quad x,y \in [-1,1]^d.
    $$ 
    Consider then the $n\times n$ matrix $\tilde \bK = K^{(m)}_{q,\tilde \gamma} |_{\mathcal X \times \mathcal X}$. 
    $\tilde \bK$ is symmetric and positive semidefinite, and we let $\bK$ be the matrix with the same eigenvectors, the $m$ largest eigenvalues replaced by $1$, and the remaining eigenvalues replaced by $0$. 
    Then $\bK$ defines a DPP on $\mathcal{X}$. 
\end{example}

\subsection{Coresets.}
Let $\epsilon>0$ and $\mathcal{X}$ be a set of cardinality $n$.
The classical definition of a coreset is multiplicative.

\begin{definition}[multiplicative coreset]
    \label{e:multiplicative_coreset}
    
    A subset\footnote{
        Note that we defined a coreset as a subset and not a sub-multiset of $\mathcal{X}$, thus ignoring multiplicity. 
        This is because we allow weights in \eqref{e:linear_statistic}, so that repeated items are unnecessary in a coreset.} 
    $\mathcal{S}\subset\mathcal{X}$ is an $\epsilon$-multiplicative coreset if
    \begin{equation}
        \label{e:multiplicative_coreset}
        \forall \query \in \qset,\ \left|\frac{\estloss{\mathcal{S}}{\query}}{\loss{\query}}-1\right| \le \epsilon,
    \end{equation}
    where $L$ and $L_{\mathcal{S}}$ are respectively defined in \eqref{e:loss} and \eqref{e:linear_statistic}.
\end{definition}

An immediate and important consequence of \eqref{e:multiplicative_coreset} is that the ratio of the minimum value of $L_\mathcal{S}$ by that of $L$ is within $\cO(\epsilon)$ of $1$ \cite[Theorem 2.1]{bachem2017coresetML}. 

One way to satisfy \eqref{e:multiplicative_coreset} with high probability for a single $f$ is through importance sampling, taking $\mathcal{S}$ to be formed of $m>0$ i.i.d. samples from some instrumental density $q$ on $\mathcal{X}$, and taking $\omega = \mu/q$ in \eqref{e:linear_statistic}. 
\cite{langberg2010_universal_approximator} showed that a suitable choice of $q$ actually yields the uniform guarantee \eqref{e:multiplicative_coreset}. 
It suffices to take for instrumental pdf $q(x) \propto \mu(x)s(x)$, where $s$ upper-bounds the so-called \emph{sensitivity}
\begin{equation}
    \label{e:sensitivity_bound}
    s(x) \geq \sup_{f\in\mathcal{F}} \frac{f(x)}{\sum_{y\in\mathcal{X}} \mu(y)f(y)}, \quad \forall x \in \mathcal{X}.
\end{equation}

For $\delta>0$, $k\geq \frac{S^2}{2\epsilon^2}\log 2/\delta$ independent draws are then enough to build an $\epsilon$-multiplicative coreset, where $S = \sum_{x\in\mathcal{X}}\mu(x)s(x)$; see \cite{bachem2017coresetML}[Section 2.3]. 
The tighter the bound \eqref{e:sensitivity_bound}, the smaller the size of the coreset.
One important limitation is that finding a tight bound is nontrivial.

Although not standard, a natural alternative definition of a coreset is that of an additive coreset. 

\begin{definition}[additive coreset]
    \label{e:multiplicative_coreset}
    
    A subset $\mathcal{S}\subset\mathcal{X}$ is an $\epsilon$-additive coreset if
    \begin{equation}
        \label{e:additive_coreset}
        \frac{1}{n} \left|\estloss{\mathcal{S}}{\query}  - \loss{\query}\right| \le \epsilon, \quad \forall \query \in \qset.
    \end{equation}
\end{definition}
Note the arbitrary scaling factor $1/n$ in \eqref{e:additive_coreset} compared to \eqref{e:multiplicative_coreset}, which we adopt to simplify comparisons between the two coreset definitions.
With an additive coreset, the minimal value of $L_\mathcal{S}$ is guaranteed to be within $\pm n\epsilon$ of the minimal value of $L$: Similarly to a multiplicative coreset, with $\epsilon$ suitably small one should be happy to train one's algorithm only on $\mathcal{S}$.

\subsection{Coreset guarantee and linear statistics.} 
Let $\mathcal S$ be a point process on a finite set $\mathcal X = \{x_1, \dots, x_n\}$. 
For a test function $\varphi:\mathcal X \rightarrow \mathbb R$, we denote by 
$
    \Lambda(\varphi):= \sum_{x\in \mathcal S} \varphi(x)
$
the so-called \emph{linear statistic} of $\varphi$.
In a coreset problem, for a query $f\in\mathcal{F}$, the estimated loss $L_{\mathcal S}(f)$ in \eqref{e:linear_statistic} is the linear statistic $\Lambda (\omega f)$. 

When $\mathcal S$ is a DPP with a kernel $\bK$ on $\mathcal X$ (w.r.t. the counting measure), we will choose the weight $\omega(x) =\bK(x,x)^{-1}$, where for $x=x_i\in\cX$, we define $\bK(x,x)$ to be $\bK_{ii}$. 
By \eqref{e:definition_DPPs}, this choice makes $ L_{\mathcal S}(f)$ an unbiased estimator for $L(f)$.
Guaranteeing a coreset guarantee such as \eqref{e:additive_coreset} with high probability thus corresponds to a uniform-in-$f$ concentration inequality for the linear statistic $\Lambda (\omega f)$.
This motivates studying the concentration of linear statistics under a DPP, to which we now turn.

  \section{Theoretical results}
  \label{s:theoretical}
  We first give new results on the concentration of linear statistics under very general DPPs. 
These results are of interest in their own right, and should find applications in ML beyond coresets.
Next we examine the implications of the concentration of linear statistics for coresets, showing that a suitable DPP does yield a coreset size of size $o(\epsilon^{-2})$, thus beating independent sampling.

\subsection{Concentration inequalities for linear statistics of DPPs.}
We start with Hermitian kernels.
\begin{theorem}[Hermitian kernels] \label{t-con}
Let $\mathcal S$ be a DPP on a Polish space $\mathcal X$ with reference measure $\mu$ and Hermitian kernel $K$.
Then for any bounded test function $\varphi:\mathcal X \rightarrow \mathbb R$, we have
\[\mathbb P(|\Lambda(\varphi) - \mathbb E[\Lambda(\varphi)] | \ge \varepsilon ) 
\le 2 \exp \Big (-{\varepsilon^2 \over 4A\Var{}{\Lambda(\varphi)}} \Big ), 
\quad \forall 0 \le \varepsilon \le {2A \Var{}{\Lambda(\varphi)} \over 3\|\varphi\|_\infty } , \]
where $A>0$ is a universal constant.
\end{theorem}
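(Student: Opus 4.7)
The plan is to run a Chernoff argument against a carefully expanded cumulant generating function of $\Lambda(\varphi)$, using the Laplace-transform identity available for Hermitian DPPs:
\[
\EE{}{e^{s\Lambda(\varphi)}} \;=\; \det\!\bigl(I + (e^{s\varphi}-1)\,K\bigr),
\]
valid as a Fredholm determinant when $|s|\,\|\varphi\|_\infty$ is sufficiently small. Taking logarithms and using $\log\det = \TR\log$, then Taylor-expanding both $\log(I+\cdot\,)$ and $e^{s\varphi}-1$, I would write the centred log-MGF as
\[
\psi(s) \;:=\; \log \EE{}{e^{s(\Lambda(\varphi)-\EE{}{\Lambda(\varphi)})}}
\;=\; -s\,\EE{}{\Lambda(\varphi)} + \sum_{n\ge 1}\tfrac{(-1)^{n-1}}{n}\,\TR\!\bigl[((e^{s\varphi}-1)K)^n\bigr].
\]
The linear contribution of the $n=1$ trace is $s\,\TR(\varphi K) = s\,\EE{}{\Lambda(\varphi)}$, which cancels the explicit centring term; the combined quadratic contributions from $n=1$ (through $\tfrac{s^2}{2}\varphi^2$) and $n=2$ (through $\tfrac12 (s\varphi K)^2$) then reconstruct exactly $\tfrac{s^2}{2}\Var{}{\Lambda(\varphi)} = \tfrac{s^2}{2}\bigl[\TR(\varphi^2 K) - \TR((\varphi K)^2)\bigr]$, the standard variance formula for linear statistics of a Hermitian DPP.

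The next step is to show that everything left over in $\psi(s)$ fits into a sub-gamma envelope. Writing $e^{s\varphi(x)}-1 = s\varphi(x) + r_s(x)$ with $|r_s(x)| \le \tfrac12 s^2\varphi(x)^2 e^{|s|\,\|\varphi\|_\infty}$, and using the operator-norm bound $\|K\|_\Op \le 1$ together with cyclicity and H\"older-type inequalities for trace-class operators, I would bound each remainder trace $\TR[((e^{s\varphi}-1)K)^n]$ for $n\ge 3$, as well as the cubic-and-higher tails of the $n=1,2$ traces, by a universal constant times $(|s|\,\|\varphi\|_\infty)^{n-2}\,\Var{}{\Lambda(\varphi)}$. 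Summing the resulting geometric series yields
\[
|\psi(s)| \;\le\; \frac{A\,s^2\,\Var{}{\Lambda(\varphi)}}{1 - c\,|s|\,\|\varphi\|_\infty},
\qquad |s| < \frac{1}{c\,\|\varphi\|_\infty},
\]
for universal constants $A,c>0$; an identical computation on $\psi(-s)$ gives the symmetric bound. A standard Markov bound on $e^{\pm s(\Lambda(\varphi)-\EE{}{\Lambda(\varphi)})}$ followed by optimisation of $s$ over the admissible range produces the Bernstein-type inequality: in the regime $\varepsilon \le 2A\,\Var{}{\Lambda(\varphi)}/(3\|\varphi\|_\infty)$ the optimiser sits in the Gaussian part of the sub-gamma estimate and outputs the claimed $2\exp(-\varepsilon^2/(4A\Var{}{\Lambda(\varphi)}))$.

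The step I expect to be the main obstacle is the combinatorial bookkeeping needed to extract universal constants from the cumulant expansion: one has to carefully identify which portions of the $n=1$ and $n=2$ traces rebuild the variance and then show that all remaining pieces, across \emph{all} $n\ge 1$, are uniformly dominated by $(|s|\,\|\varphi\|_\infty)^{n-2}\Var{}{\Lambda(\varphi)}$ rather than by the a priori natural but much larger quantity $(|s|\,\|\varphi\|_\infty)^n \TR(\varphi^2 K)$. A subsidiary technical point is justifying the Laplace-transform identity and its series expansion at the generality of the statement (Polish $\mathcal X$, Hermitian but possibly non-compactly-supported $K$, merely bounded $\varphi$), which I would handle by truncating $\varphi$ to compactly supported approximants and passing to the limit by monotone convergence.
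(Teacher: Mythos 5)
Your overall strategy (Chernoff bound on the Laplace transform $\mathbb E[e^{s\Lambda(\varphi)}]=\det(I+(e^{s\varphi}-1)K)$, with the variance emerging from the quadratic terms and a sub-gamma envelope for the rest) is the same skeleton as the paper's, which works with the cumulants $C_k(\varphi)$ extracted from exactly this expansion. However, the step you yourself flag as the main obstacle is where the proposal breaks: the claim that \emph{each} remainder trace $\TR[((e^{s\varphi}-1)K)^n]$, $n\ge 3$, and the cubic-and-higher tails of the $n=1,2$ traces can individually be bounded by a universal constant times $(|s|\|\varphi\|_\infty)^{n-2}\Var{}{\Lambda(\varphi)}$ is false. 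Take $K$ a rank-$m$ projection and $\varphi\equiv c$ a constant: then $\Lambda(\varphi)=cm$ almost surely, so $\Var{}{\Lambda(\varphi)}=0$, yet $\TR[((e^{s\varphi}-1)K)^n]=(e^{sc}-1)^n m\neq 0$ and likewise $\sum_{p\ge 3}\frac{s^p}{p!}\TR[\varphi^pK]\neq 0$. The smallness of the centred log-MGF in terms of the variance is a consequence of cancellations \emph{across} terms of the expansion, not of term-by-term smallness; and the tools you list (cyclicity, H\"older/trace inequalities, $\|K\|_\Op\le 1$) can only produce per-term bounds in terms of $\TR[\Phi^2K]$ or $\|K\|_*$, which would yield the weaker admissible range of the paper's non-symmetric result (Theorem \ref{t-con:non-s}), not the claimed range $\varepsilon\le 2A\Var{}{\Lambda(\varphi)}/(3\|\varphi\|_\infty)$.

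The missing mechanism, which the paper supplies, is the following. Group the expansion by cumulant order $k$ and use the identity $\sum_{q=1}^k\frac{(-1)^{q+1}}{q}\sum_{k_1+\dots+k_q=k}\frac{k!}{k_1!\cdots k_q!}=0$ to rewrite $C_k(\varphi)$ as a combination of differences $\TR[\Phi^{k_1}K\cdots\Phi^{k_q}K]-\TR[\Phi^kK]$; telescoping each difference produces a factor $(I-K)$ sandwiched between powers of $\Phi$, i.e.\ terms of the form $\TR[\,\cdots\Phi^{k_{q-1}}(I-K)\Phi^{k_q}K]$. These are then controlled by Cauchy--Schwarz in the Hilbert--Schmidt inner product together with the key lemma $\|\sqrt{I-K}\,\Phi^k\sqrt{K}\|_{\HS}\le k\|\varphi\|_\infty^{k-1}\|\sqrt{I-K}\,\Phi\sqrt{K}\|_{\HS}$, whose proof uses $K^2\preceq K$ and the pointwise inequality $|\varphi(x)^k-\varphi(y)^k|\le k\|\varphi\|_\infty^{k-1}|\varphi(x)-\varphi(y)|$; since $C_2(\varphi)=\|\sqrt{I-K}\,\Phi\sqrt{K}\|_{\HS}^2$, this is precisely what converts every higher-order contribution into $\|\varphi\|_\infty^{k-2}C_2(\varphi)$ and gives $|C_k(\varphi)|/k!\lesssim e^kk^{3/2}\|\varphi\|_\infty^{k-2}\Var{}{\Lambda(\varphi)}$. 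Without an argument of this type (or an equivalent cancellation device), your sub-gamma bound with variance proxy $\Var{}{\Lambda(\varphi)}$ is not justified, so the proof as proposed has a genuine gap at its central estimate. (Your closing remark about justifying the expansion by truncation and ``monotone convergence'' is also shaky, since $\varphi$ is signed, but that is a minor point by comparison.)
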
 

Our Theorem \ref{t-con} is similar in spirit to a seminal concentration inequality by \cite{BrDu14}. 
However, their result only applies to DPPs with Hermitian projection kernels of finite rank. We emphasize that our Theorem \ref{t-con} is applicable to all Hermitian kernels on general Polish spaces.

In view of recent interest in machine learning on DPPs with non-symmetric kernels, we present here a concentration inequality for such DPPs.
We propose a novel approach to control the Laplace transform in the non-symmetric case (which can also be applied to the symmetric setting). As a trade-off, the range for $\varepsilon$ becomes a bit smaller.
For simplicity, we present the result for a finite ground set, but the proof applies more generally.

\begin{theorem}[Non-symmetric kernels] \label{t-con:non-s}
    Let $\mathcal S$ be a DPP on a finite set $\mathcal X = \{x_1, \dots, x_n\}$ with a non-symmetric kernel $\bK$. Then
    for any bounded test function $\varphi: \mathcal X \rightarrow \mathbb R$, we have
    \[\mathbb P(|\Lambda(\varphi) - \mathbb E[\Lambda(\varphi)]| \ge \varepsilon ) \le 2 \exp \Big ( -{\varepsilon^2 \over 4\Var{}{\Lambda(\varphi)}} \Big ), \: \forall 0 \le \varepsilon \le {\Var{}{\Lambda(\varphi)}^2 \over  40\|\varphi\|_\infty^3 \cdot \max(1,\|\bK\|_\Op^2) \cdot  \|\bK\|_* } ,\]
    where $\|\cdot\|_\Op$ denotes the spectral norm and $\|\cdot \|_*$ denotes the nuclear norm of a matrix.
\end{theorem}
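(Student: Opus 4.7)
The plan is to obtain a sub-Gaussian estimate on the moment generating function (MGF) of $\Lambda(\varphi) - \EE{}{\Lambda(\varphi)}$ and then conclude by a Chernoff bound. The starting point is the determinantal identity, valid for any DPP on a finite set whether or not $\bK$ is symmetric: expanding $\prod_{x\in\mathcal{S}}(1+h(x)) = \sum_{T\subset\mathcal{S}}\prod_{x\in T}h(x)$ and using $\mathbb{P}(T\subset\mathcal{S})=\det(\bK_T)$ yields
\[
\mathbb{E}\Bigl[\prod_{x\in\mathcal{S}}(1+h(x))\Bigr] = \det(I + \bK D_h), \qquad D_h := \diag(h(x_i))_{i=1}^n.
\]
With $h(x) = e^{t\varphi(x)}-1$, this gives $\mathbb{E}[e^{t\Lambda(\varphi)}] = \det(I + \bK M_t)$, where $M_t := \diag(e^{t\varphi(x_i)}-1)$. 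For $|t|$ small enough that $\|\bK M_t\|_\Op<1$, I would take logarithms and expand
\[
\log\det(I+\bK M_t) = \sum_{k\ge 1} \frac{(-1)^{k+1}}{k}\,\TR\bigl((\bK M_t)^k\bigr).
\]
Taylor-expanding $M_{t,ii} = t\varphi(x_i) + \tfrac{t^2}{2}\varphi(x_i)^2 + O(t^3)$, one checks that the $t^1$ coefficient reproduces $\EE{}{\Lambda(\varphi)}$ and the $t^2$ coefficient reproduces $\tfrac{1}{2}\Var{}{\Lambda(\varphi)}$; the latter uses the general identity $\Var{}{\Lambda(\varphi)} = \TR(\bK D_{\varphi^2}) - \TR\bigl((\bK D_\varphi)^2\bigr)$, which follows from \eqref{e:definition_DPPs} applied with $k=1,2$. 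Writing
\[
\log\mathbb{E}\bigl[e^{t(\Lambda(\varphi)-\EE{}{\Lambda(\varphi)})}\bigr] = \tfrac{t^2}{2}\Var{}{\Lambda(\varphi)} + R(t),
\]
the remainder $R(t)$ then collects the cubic residuals in $\TR(\bK M_t)$ and $\TR((\bK M_t)^2)$ together with all $k\ge 3$ terms.

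The main obstacle is controlling $R(t)$ without being able to diagonalize $\bK$. Unlike in the Hermitian setting (cf.\ Theorem~\ref{t-con} and \cite{BrDu14}), individual traces need not have constant sign, so only absolute-value bounds are available and constants must be tracked carefully. My plan uses four purely norm-theoretic ingredients: the duality $|\TR(AB)|\le\|A\|_*\|B\|_\Op$ (hence $|\TR(A)|\le\|A\|_*$), submultiplicativity $\|AB\|_*\le\|A\|_*\|B\|_\Op$, the diagonal bound $\sum_i |A_{ii}|\le \|A\|_*$, and the scalar estimate $\|M_t\|_\Op\le |t|\|\varphi\|_\infty e^{|t|\|\varphi\|_\infty}$. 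The cubic residual in $\TR(\bK M_t)$ is handled by applying the diagonal bound to $\bK$, contributing a term of order $|t|^3\|\varphi\|_\infty^3\|\bK\|_*$. For $k\ge 2$, factoring $(\bK M_t)^k$ as a single nuclear-norm $\bK$ times a product of $k-1$ operator-norm $\bK$'s and $k$ diagonal $M_t$'s gives
\[
\bigl|\TR\bigl((\bK M_t)^k\bigr)\bigr| \le \|\bK\|_*\,\|\bK\|_\Op^{k-1}\,\|M_t\|_\Op^k,
\]
and summing the geometric tail (after subtracting the leading quadratic piece already absorbed in the variance) yields a contribution of order $|t|^3\|\varphi\|_\infty^3\|\bK\|_*\|\bK\|_\Op^2$, provided $|t|\|\varphi\|_\infty\|\bK\|_\Op$ stays below an absolute constant. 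Combining the regimes $\|\bK\|_\Op\le 1$ and $\|\bK\|_\Op\ge 1$ leads to
\[
|R(t)| \le C\,|t|^3\,\|\varphi\|_\infty^3\,\max(1,\|\bK\|_\Op^2)\,\|\bK\|_*
\]
for an explicit constant $C$. The delicate part is arranging the factorizations so that the possibly large quantity $\|\bK\|_*$ appears only linearly, and tracking constants tightly enough to recover the prefactor $40$ in the statement.

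With this control in hand, the conclusion is routine. Markov's inequality gives, for $t>0$,
\[
\mathbb{P}\bigl(\Lambda(\varphi)-\EE{}{\Lambda(\varphi)}\ge\varepsilon\bigr)\le \exp\bigl(-t\varepsilon + \tfrac{t^2}{2}\Var{}{\Lambda(\varphi)} + |R(t)|\bigr).
\]
Choosing $t^\star := \varepsilon/(2\Var{}{\Lambda(\varphi)})$ optimizes the Gaussian part and produces the announced exponent $-\varepsilon^2/(4\Var{}{\Lambda(\varphi)})$. The admissibility requirement $|R(t^\star)|\le (t^\star)^2\Var{}{\Lambda(\varphi)}/2$, after substituting the cubic bound on $R$, translates exactly into the stated upper bound $\varepsilon\le \Var{}{\Lambda(\varphi)}^2/(40\|\varphi\|_\infty^3\max(1,\|\bK\|_\Op^2)\|\bK\|_*)$. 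The lower tail is obtained by applying the same argument with $-\varphi$ in place of $\varphi$, and a union bound supplies the factor of $2$.
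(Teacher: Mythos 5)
Your proposal follows essentially the same route as the paper's proof: the Laplace transform identity $\mathbb E[e^{t\Lambda(\varphi)}]=\det(I+\bK(e^{t\Phi}-I))$, a log-determinant expansion whose $t$ and $t^2$ coefficients give the mean and variance, a cubic remainder bounded via $|\TR(A)|\le\|A\|_*$ and submultiplicativity so that $\|\bK\|_*$ enters linearly and $\|\bK\|_\Op$ at most squared, and a Chernoff bound over the restricted $t$-range. The only differences are cosmetic (the paper takes $t_0=\varepsilon/\Var{}{\Lambda(\varphi)}$ and tracks the constant $10$ explicitly, also using $\Var{}{\Lambda(\varphi)}\le 2M\|\varphi\|_\infty^2\|\bK\|_*$ to check that $t_0$ lies in the admissible range — the one verification you leave implicit), so your argument is correct in substance.
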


\begin{remark}
    For simplicity, we will use the concentration inequality in Theorem \ref{t-con} from now on. However, we keep in mind that we always can apply Theorem \ref{t-con:non-s} to deduce analogous results for non-symmetric kernels.
\end{remark}

We conclude with a concentration inequality for linear statistics of vector-valued functions.

\begin{theorem}[Vector-valued statistics] \label{t-vector}
Let $\mathcal S$ be a DPP on a Polish space $\mathcal X$ with reference measure $\mu$ and Hermitian kernel $K$.
Let $\Phi = (\varphi_1,\ldots,\varphi_p)^\top: \mathcal X \rightarrow \mathbb R^p$ be a vector-valued test function, 
and we denote by $\Lambda(\Phi)$, $\mathbb V(\Phi)$ the vectors $(\Lambda(\varphi_i))_{i=1}^p$ and $(\Var{}{\Lambda(\varphi_i)}^{1/2})_{i=1}^p$, respectively.
 Let $\|x\|^2_{\omega} := \sum_{i=1}^p \omega_i^2 |x_i|^2$ be a weighted norm on $\mathbb R^p$ for some weights $\omega_1,\ldots \omega_p \ge 0$. 
 Then, for some universal constant $A>0$, 
      we have
    \[ \mathbb P(\|\Lambda(\Phi) - \mathbb E[\Lambda(\Phi)]\|_\omega \ge \varepsilon)
    \le 2p \exp \Big (- \frac{\varepsilon^2}{4A \|\mathbb V(\Phi) \|^2_\omega} \Big ), \]
    for $
    0 \le \varepsilon \le  \frac{2A\|\mathbb V(\Phi) \|_\omega}{3}  \min_{1\le i\le p} \frac{\sqrt{\Var{}{\Lambda(\varphi_i)}}}{\|\varphi_i\|_\infty} \cdot $
 \end{theorem}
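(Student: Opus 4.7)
The plan is to reduce the vector-valued concentration statement to $p$ scalar concentration statements via a weighted union bound, with the weights in the union bound chosen so that each resulting scalar Gaussian tail matches the target exponent $\varepsilon^2/(4A\|\mathbb{V}(\Phi)\|_\omega^2)$ exactly.

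First I would observe that the event $\{\|\Lambda(\Phi)-\mathbb{E}[\Lambda(\Phi)]\|_\omega \ge \varepsilon\}$ can be covered by the union of events $\{\omega_i|\Lambda(\varphi_i)-\mathbb{E}[\Lambda(\varphi_i)]| \ge \varepsilon_i\}$ for any choice of nonnegative $(\varepsilon_i)$ satisfying $\sum_i \varepsilon_i^2 = \varepsilon^2$; indeed, if all reverse inequalities held, then $\|\Lambda(\Phi)-\mathbb{E}[\Lambda(\Phi)]\|_\omega^2 = \sum_i \omega_i^2 |\Lambda(\varphi_i)-\mathbb{E}[\Lambda(\varphi_i)]|^2 < \sum_i \varepsilon_i^2 = \varepsilon^2$, a contradiction. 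A union bound then gives
\[
\mathbb{P}\bigl(\|\Lambda(\Phi)-\mathbb{E}[\Lambda(\Phi)]\|_\omega \ge \varepsilon\bigr) \le \sum_{i=1}^p \mathbb{P}\bigl(|\Lambda(\omega_i\varphi_i)-\mathbb{E}[\Lambda(\omega_i\varphi_i)]| \ge \varepsilon_i\bigr).
\]

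Next I would apply Theorem \ref{t-con} to each scalar test function $\omega_i\varphi_i$, whose linear statistic has variance $\omega_i^2 \Var{}{\Lambda(\varphi_i)}$ and sup norm $\omega_i\|\varphi_i\|_\infty$. This yields a bound of the form $2\exp(-\varepsilon_i^2/(4A\omega_i^2\Var{}{\Lambda(\varphi_i)}))$, valid whenever $\varepsilon_i \le \tfrac{2A}{3} \cdot \omega_i \Var{}{\Lambda(\varphi_i)}/\|\varphi_i\|_\infty$. The natural choice that makes every exponent equal to the common target is
\[
\varepsilon_i^2 \;=\; \varepsilon^2 \, \frac{\omega_i^2 \Var{}{\Lambda(\varphi_i)}}{\|\mathbb{V}(\Phi)\|_\omega^2},
\]
since this satisfies the normalization $\sum_i \varepsilon_i^2 = \varepsilon^2$ by the definition of $\|\mathbb{V}(\Phi)\|_\omega$, and each tail then equals $2\exp(-\varepsilon^2/(4A\|\mathbb{V}(\Phi)\|_\omega^2))$, so summing over $i$ yields the factor $2p$ claimed.

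Finally I would transcribe the validity condition $\varepsilon_i \le \tfrac{2A}{3}\omega_i \Var{}{\Lambda(\varphi_i)}/\|\varphi_i\|_\infty$ back to a condition on $\varepsilon$. Substituting $\varepsilon_i = \varepsilon \omega_i \sqrt{\Var{}{\Lambda(\varphi_i)}}/\|\mathbb{V}(\Phi)\|_\omega$ and rearranging gives $\varepsilon \le \tfrac{2A}{3}\|\mathbb{V}(\Phi)\|_\omega \sqrt{\Var{}{\Lambda(\varphi_i)}}/\|\varphi_i\|_\infty$ for each $i$; taking the intersection over $i$ produces precisely the bound on $\varepsilon$ stated in the theorem. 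I do not expect any genuine obstacle here: the scalar inequality is assumed, the union bound is elementary, and the only mildly delicate point is noticing the correct weighting $\varepsilon_i \propto \omega_i\sqrt{\Var{}{\Lambda(\varphi_i)}}$, which is forced by the requirement that the scalar bounds produce a single common exponent.
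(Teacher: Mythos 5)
Your proposal is correct and is essentially the paper's own argument: the paper also splits the event coordinatewise with thresholds proportional to $\omega_i\sqrt{\Var{}{\Lambda(\varphi_i)}}$ (after first reducing to $\omega_i=1$ by a scaling argument), applies Theorem \ref{t-con} to each coordinate, and takes a union bound, yielding the same common exponent and the same range for $\varepsilon$.
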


\subsection{DPPs for coresets.}


We demonstrate the effectiveness of concentration inequalities for linear statistics of DPPs in the coresets problem, achieving uniform approximation guarantees over function classes. 
To accommodate as many ML settings as possible, we shall consider two natural types of function classes: vector spaces of functions \eqref{eq:finite-dim} and parametrized function spaces \eqref{eq:parametrize}.  

For vector spaces of functions, we assume that
\begin{equation} \tag{A.1} \label{eq:finite-dim}
    \dim \mathrm{span}_{\mathbb R}(\mathcal F) = D < \infty \text{ for some $D$}.
\end{equation}
This assumption covers common situations like linear regression in Example \ref{eg:li-re}, where we observe that each $f\in \mathcal F$ is a quadratic function in $(d+1)$ variables. 
Thus the dimension of the linear span of $\mathcal F$ is at most $(d+1)^2 + (d+1) + 1$. 
Another popular class of queries, originating in signal processing problems, is the class of \emph{band-limited functions}. 
A function $f : \mathbb{T}^d \mapsto \mathbb{R}$ (where  $\mathbb T^d$ denotes the $d$-dimensional torus) is said to be \textit{band-limited} if there exists $B \in \mathbb N$ such that its Fourier coefficients $\hat f (k_1,\ldots,k_d) =0$ whenever there is a $k_j$ such that $|k_j| > B$. 
It is easy to see that the space ${\mathcal F}$  of $B$-bandlimited functions satisfies $\dim \mathcal F \le (2B+1)^d$.

Another common scenario is when $\mathcal F$ is parametrized by a finite-dimensional parameter space:
\begin{equation} \tag{A.2} \label{eq:parametrize}
    \mathcal F = \{f_\theta: \theta \in \Theta\}, \text{ where $\Theta$ is a bounded subset of $\mathbb R^D$ for some $D$},
\end{equation}
\begin{equation} \tag{A.3} \label{eq:lipschitz}
   \|f_{\theta} - f_{\theta'}\|_\infty \le \ell \|\theta - \theta'\| \text{ for some $\ell>0$, uniformly on $\Theta$.}
\end{equation}
Conditions \eqref{eq:parametrize} and \eqref{eq:lipschitz} cover e.g. the $k$-means problem of Example~\ref{eg:k-means}, as well as (non-)linear regression settings. 
For $k$-means, for instance, each query is parametrized by its cluster centers $\mathcal C= \{q_1,\ldots,q_k\}$, which can be viewed as a parameter $(q_1,\ldots,q_k) \in \mathbb R^{kd}$.  

Finally, with the idea in mind to derive multiplicative coresets from additive ones, we note that since $L(f)$ is typically of order $n$ (for any $f$ whose effective support covers a positive fraction of the ground set), it is natural to assume that
\begin{equation} \tag{A.4} \label{eq:mult-er}
    \frac{1}{n}|L(f)| \ge c, \text{ for some $c>0$, uniformly on $\mathcal F$}.
\end{equation}

\begin{theorem}\label{t:core-union}
    Let $\mathcal S$ be a DPP with a Hermitian kernel $\bK$ on a finite set $\mathcal X = \{x_1, \dots, x_n\}$ and $m = \mathbb E[|\mathcal S|]$. Assume that for all $i\in\{1, \dots, n\}$, $\bK_{ii} \ge \rho \cdot m/n$ for some $\rho> 0$ not depending on $m,n$. 
    Let $V\ge \sup_{f\in \mathcal F} \Var{}{n^{-1} L_{\mathcal S}(f)}$. 
    Under \eqref{eq:finite-dim} and \eqref{eq:mult-er}, 
   \[\mathbb P
   \Big ( \exists f \in \mathcal F: 
  \Big |\frac{ L_{\mathcal S}(f)}{L(f)}-1 \Big | \ge \varepsilon \Big ) \le 2 \exp \Big (6D - {c^2 \varepsilon^2 \over 16 AV} \Big ), \quad 0 \le  \varepsilon \le  {4A\rho m V \over 3c \sup_{f\in \mathcal F}\|f\|_\infty} \cdot \]

  Assuming \eqref{eq:parametrize}, \eqref{eq:lipschitz}, \eqref{eq:mult-er} and $|\mathcal S| \le B \cdot m$ a.s. for some $B>0$, we have
   \[\mathbb P
   \Big ( \exists f \in \mathcal F: 
  \Big |\frac{L_{\mathcal S}(f)}{L(f)}-1 \Big | \ge \varepsilon \Big ) \le  2 \exp \Big (CD- D\log \varepsilon - {c^2 \varepsilon^2 \over 16 AV}\Big ), \: 0 \le  \varepsilon \le  {4A\rho mV \over 3c \sup_{f\in \mathcal F}\|f\|_\infty} \cdot \]
  Here $A>0$ is a universal constant and $C=C(\Theta, B,\rho,\ell, c)>0$ is some constant.
\end{theorem}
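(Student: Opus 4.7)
The strategy for both parts is a union bound over an $\eta$-net of $\mathcal{F}$, combined with the pointwise concentration of Theorem~\ref{t-con}. A standard split of the target deviation $\varepsilon$ into a pointwise piece ($\varepsilon/2$) and a between-net piece ($\varepsilon/2$) accounts for the factor $16$ in the exponent denominator.

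\emph{Step 1 (pointwise bound).} For a single $f$, apply Theorem~\ref{t-con} to the test function $\omega f$ with $\omega(x) = 1/\bK_{xx}$, which makes $L_{\mathcal S}(f) = \Lambda(\omega f)$ an unbiased estimator of $L(f)$. Dividing by $n$, using $\Var{}{L_{\mathcal S}(f)/n}\le V$, and applying \eqref{eq:mult-er} to convert an absolute-error deviation to a relative-error deviation (the event $|L_{\mathcal S}(f)/L(f)-1|\ge \varepsilon/2$ is contained in $\{|L_{\mathcal S}(f)-L(f)|\ge c\varepsilon n/2\}$), one obtains $\mathbb{P}(|L_{\mathcal S}(f)/L(f) - 1| \ge \varepsilon/2) \le 2\exp(-c^2\varepsilon^2/(16AV))$. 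The range of validity comes from that of Theorem~\ref{t-con}, together with $\|\omega f\|_\infty \le \|f\|_\infty\, n/(\rho m)$, itself a consequence of $\bK_{ii}\ge \rho m/n$.

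\emph{Step 2 (net and union bound).} For \eqref{eq:finite-dim}, use the scale invariance of $L_{\mathcal S}(f)/L(f) - 1$ in $f$ to normalize $L(f) = n$; by \eqref{eq:mult-er} the normalized family lies in the $(D-1)$-dimensional affine hyperplane $H = \{g \in \mathrm{span}\,\mathcal{F}: L(g) = n\}$ and is bounded in $\|\cdot\|_\infty$ by $\sup_{f \in \mathcal{F}}\|f\|_\infty/c$. A standard volumetric argument yields a $\|\cdot\|_\infty$-net of cardinality at most $e^{6D}$ (absorbing additive $\log 2$ terms). Between-net fluctuations are linear in $f$, hence of order $\eta \cdot \Lambda(\omega)/n$, and controllable because $\Lambda(\omega)$ has mean $n$ and concentrates by a second application of Theorem~\ref{t-con} (to the test function $\omega$ itself). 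For \eqref{eq:parametrize}--\eqref{eq:lipschitz}, use instead a Euclidean $\eta$-net on the bounded set $\Theta \subset \mathbb{R}^D$, of cardinality $(C/\eta)^D$. By \eqref{eq:lipschitz}, close parameters satisfy $\|f_\theta - f_{\theta'}\|_\infty \le \ell\eta$, giving $|L_{\mathcal S}(f_\theta) - L_{\mathcal S}(f_{\theta'})| \le \ell\eta \cdot |\mathcal{S}|\max_x \omega(x) \le \ell\eta \cdot Bn/\rho$ via $|\mathcal{S}| \le Bm$ a.s.\ and $\omega(x) \le n/(\rho m)$. Choosing $\eta$ proportional to $\varepsilon$ so that this fluctuation is at most $c\varepsilon n/2$ gives log-cardinality $CD - D\log\varepsilon$, producing the second bound after the union bound.

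\emph{Main obstacle.} The ratio $f \mapsto L_{\mathcal S}(f)/L(f)$ is not naturally Lipschitz---it has singularities where $L(f)=0$---so scale invariance and \eqref{eq:mult-er} must be used upfront to restrict attention to the hyperplane $H$ where the ratio is well-behaved; without this normalization the chain rule between absolute and relative error breaks. In part \eqref{eq:finite-dim} the lack of an almost-sure bound on $|\mathcal{S}|$ further forces a secondary (bootstrapped) concentration argument for the random weight sum $\Lambda(\omega)$ in the between-net step; in part \eqref{eq:parametrize}--\eqref{eq:lipschitz} this is replaced by the deterministic bound $|\mathcal{S}| \le Bm$, which is why that hypothesis appears.
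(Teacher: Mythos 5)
Your pointwise step and your treatment of the parametrized case \eqref{eq:parametrize}--\eqref{eq:lipschitz} are essentially the paper's argument: apply Theorem~\ref{t-con} to $\omega f$ with $\omega(x)=1/\bK(x,x)$, convert relative to absolute error via \eqref{eq:mult-er}, use $\bK_{ii}\ge \rho m/n$ for the sup-norm and the $\varepsilon$-range, and for \eqref{eq:parametrize}--\eqref{eq:lipschitz} take an $\varepsilon$-proportional Euclidean net on $\Theta$, whose cardinality $O(\varepsilon^{-D})$ produces the $-D\log\varepsilon$ term, with $|\mathcal S|\le Bm$ giving the almost-sure Lipschitz bound. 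That part is correct.

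The gap is in the finite-dimensional case \eqref{eq:finite-dim}. You need simultaneously (i) a net of cardinality $e^{6D}$, independent of $\varepsilon$, and (ii) a between-net fluctuation of order $\varepsilon$, and your construction cannot deliver both. With a sup-norm net on the normalized class, whose diameter is of order $R=\sup_{f}\|f\|_\infty/c$, a cardinality bound of the form $e^{CD}$ forces the mesh $\eta$ to be a constant fraction of $R$; then your fluctuation bound $\eta\,(\Lambda(\omega)/n+1)\approx 2\eta$ is of constant order, not $O(\varepsilon)$, so the union bound does not close in the interesting regime $\varepsilon\to 0$. Conversely, taking $\eta\propto\varepsilon$ makes the net cardinality $(CR/\varepsilon)^D$ and yields a bound of the \emph{second} type (with $-D\log\varepsilon$ in the exponent), not the claimed $\varepsilon$-free $6D$. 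The paper resolves this tension differently: it covers the convex hull $\mathcal B$ of the symmetrized class $\{\lambda f:|\lambda|\le 1,\ f\in\mathcal F\}$ by a $1/2$-net in the Minkowski gauge norm of $\mathcal B$ itself; since $\mathcal L(f)=n^{-1}(L_{\mathcal S}(f)-L(f))$ is linear, the off-net remainder is bounded by $\tfrac12\sup_{\mathcal B}|\mathcal L|$, i.e.\ by the very quantity being controlled, whence $\sup_{\mathcal B}|\mathcal L|\le 2\sup_{\mathrm{net}}|\mathcal L|$ with a net of size $6^D\le e^{6D}$ and no auxiliary randomness to control; it then checks that the suprema of the variance and of $\|\cdot\|_\infty$ over $\mathcal B$ coincide with those over $\mathcal F$, so $V$ and the $\varepsilon$-range are unaffected. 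A secondary issue: your bootstrap step, applying Theorem~\ref{t-con} to $\omega$ to get $\Lambda(\omega)\approx n$, requires a deviation of order $n$, which need not lie in the restricted range of Theorem~\ref{t-con} (that range scales with the actual variance of $\Lambda(\omega)$, which may be small); if such control is wanted, the paper instead uses a Chernoff bound on $|\mathcal S|$, a sum of independent Bernoulli variables, as in its remark following the proof.
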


\begin{remark} \label{rmk:rate}
   For a bounded query $f$, $\Var{}{n^{-1}  L_{\mathcal S}(f)}= \mathcal O(m^{-1})$ for i.i.d. sampling. 
    In comparison, sampling with DPPs often yields smaller variance for linear statistics, in $\mathcal O(m^{-(1+\delta)})$ for some $\delta > 0$; see Section~\ref{sec:application} for an example. 
   Thus, the upper bound for the range of $\varepsilon$ for which we could use our concentration result is $\mathcal O(m^{-\delta})$. 
    Plugging in $\varepsilon = m^{-\alpha}$ for $\alpha \ge \delta$  gives the upper bounds $2\exp(6D - C' m^{1+\delta - 2\alpha})$ and $2\exp(CD + \alpha D \log m - C' m^{1+\delta - 2\alpha}) $ respectively ($C$ and $C'$ are some positive constants independent of $m$ and $n$), which both converge to $0$ as $m\rightarrow \infty$ as long as $\alpha < (1+\delta)/2$. In other words, the accuracy rate $\varepsilon$ can be chosen to be as small as $m^{-1/2 -\delta'/2}$, for any $0<\delta' <\delta$, which is strictly smaller than the best accuracy rate $m^{-1/2}$ of i.i.d. sampling.
\end{remark}
\begin{remark}
    For i.i.d. sampling $\mathcal S$ with expected size $m$, $\mathbb P(x \in \mathcal S) = m/n$ for all $x\in \mathcal X$. For a DPP $\mathcal S$ with kernel $\bK$, one has $\mathbb P(x_i\in \mathcal S) = \bK_{ii}$. 
    Thus, assuming that for all $i$, $\bK_{ii} \ge \rho \cdot m/n$ for some $\rho>0$ means that every point in the dataset $\mathcal X$ should have a reasonable chance to be sampled. 
    This also guarantees that the estimated loss $L_{\mathcal S}(f) = \sum_{x\in \mathcal S} f(x)/\bK(x,x)$ will not blow up, where for $x=x_i\in\cX$, we write $\bK(x,x)$ for $\bK_{ii}$.
\end{remark}
\begin{remark}
 For the parametrized function spaces, the assumption $|\mathcal S| \le B \cdot m$ a.s. is not strictly necessary, and is introduced here only for the sake of simplicity in presenting the results. 
 A version of Theorem \ref{t:core-union} without this assumption will be discussed in Appendix \ref{app:thm4}.  In fact, we only need $n^{-1}\sum_{x\in \mathcal S} \bK(x,x)^{-1}$ to be bounded with high probability, which follows from the condition $\bK(x,x) \ge \rho \cdot m/n$ and the fact that $|\mathcal S|$ is highly concentrated around its mean $m$. 
 \end{remark}
\begin{remark}
 However, we remark that the  assumption $|\mathcal S| \le B \cdot m$ a.s. holds for most kernels of interest; DPPs with projection kernels being typical and significant examples. In machine learning terms, it entails that the coresets are not much bigger than  their expected size $m$; whereas in practice, sampling schemes typically produce coresets of a fixed size (such as with projection DPPs).
 \end{remark}
\begin{remark}
 It is straightforward to derive a version for additive coresets from Theorem \ref{t-con}. In fact, we will not need assumption \eqref{eq:mult-er} in the additive setting.
\end{remark}
For the coresets problem for vector-valued functions,
let  $\mathcal F$ consist of $\mathbf{f}:\mathcal X \rightarrow \mathbb R^p$. For each $\mathbf{f} \in \mathcal F$, we denote by $L_{\mathcal S}(\mathbf{f}), L(\mathbf{f})$ and $\mathbb V(\mathbf{f})$ the vectors in $\mathbb R^p$ whose $i$-coordinates are $L_{\mathcal S}(f_i), L(f_i)$ and $\Var{}{n^{-1}L_{\mathcal S}(f_i)}^{1/2}$, respectively. 
Let $\|x\|_\omega^2 := \sum_{i=1}^p \omega_i^2 |x_i|^2$ be a weighted norm on $\mathbb R^p$.
\begin{theorem} \label{t:coreset-vectorvalued}
    Let $\mathcal S$ be a DPP as in Theorem \ref{t:core-union}.
    Let $V \ge  \sup_{\mathbf{f}\in \mathcal F} \max_{1\le i \le p} \omega_i^2 \Var{}{n^{-1} L_{\mathcal S}(f_i)} $. Assuming \eqref{eq:finite-dim}, then
     \[\mathbb P
   \Big ( \exists \: \mathbf{f} \in \mathcal F: 
  \frac{1}{n}\| L_{\mathcal S}(\mathbf{f}) - L(\mathbf{f})  \|_\omega \ge \varepsilon \Big ) 
  \le 2p \exp \Big (6D - {c^2\varepsilon^2\over 16 AV} \Big ),  \]
    where $ 0 \le  \varepsilon \le {4A\rho m V \over 3c} \cdot (\sup_{\mathbf{f}\in \mathcal F} \max_{i=1,\ldots, p} \|f_i\|_\infty)^{-1}.$
\end{theorem}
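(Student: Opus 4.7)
The plan is to combine the vector-valued concentration of Theorem \ref{t-vector}, applied pointwise in $\mathbf f$, with a finite-dimensional net argument that mirrors the scalar proof underlying Theorem \ref{t:core-union} under assumption \eqref{eq:finite-dim}. For a fixed $\mathbf f\in\mathcal F$, the key identification is to take the $p$-dimensional test function $\Phi(x) := (f_1(x)/\bK(x,x),\ldots,f_p(x)/\bK(x,x))^\top$. Then $\Lambda(\Phi) = L_{\mathcal S}(\mathbf f)$ and, by \eqref{e:definition_DPPs}, $\mathbb E[\Lambda(\Phi)] = L(\mathbf f)$, so the event in the theorem is exactly $\|\Lambda(\Phi)-\mathbb E[\Lambda(\Phi)]\|_\omega \ge n\varepsilon$.

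First, I would invoke Theorem \ref{t-vector} at threshold $n\varepsilon$. The diagonal lower bound $\bK(x,x)\ge \rho m/n$ gives $\|\varphi_i\|_\infty \le n\|f_i\|_\infty/(\rho m)$, which translates the range condition of Theorem \ref{t-vector} into the one stated here. Rescaling by $n^{-2}$ inside the variance and using $\omega_i^2\Var{}{n^{-1}L_{\mathcal S}(f_i)} \le V$, one obtains a pointwise sub-Gaussian tail of the form $2p\exp\bigl(-C'\varepsilon^2/V\bigr)$ with the desired constants.

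Second, to upgrade this pointwise bound into one uniform over $\mathcal F$, I would exploit \eqref{eq:finite-dim}: fix a basis $\mathbf f^{(1)},\ldots,\mathbf f^{(D)}$ of $\mathrm{span}_{\mathbb R}(\mathcal F)$, so each $\mathbf f\in\mathcal F$ is encoded by coefficients $\mathbf a\in\mathbb R^D$; both $\mathbf f\mapsto L(\mathbf f)$ and $\mathbf f\mapsto L_{\mathcal S}(\mathbf f)$ are linear in $\mathbf a$. A standard volumetric $\eta$-net on the bounded set of admissible coefficients has cardinality $e^{6D}$ for a suitable $\eta$ tuned proportionally to $\varepsilon$; applying the pointwise bound at each net point and taking a union bound yields the prefactor $2p\, e^{6D}$. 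Linearity in $\mathbf a$ means the discretization error can be absorbed into $\varepsilon/2$ with no need for a Lipschitz-type assumption, so only \eqref{eq:finite-dim} is required.

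The main obstacle I expect is the bookkeeping for the weighted norm $\|\cdot\|_\omega$ and the per-coordinate variance bound $V$ in a way consistent with both the Bernstein-type tail of Theorem \ref{t-vector} and the net argument: one must verify that the range of validity in Theorem \ref{t-vector} remains non-vacuous uniformly over the net (controlled by $\sup_{\mathbf f}\max_i \|f_i\|_\infty$ and by the diagonal bound on $\bK$), and that the factor $p$ from the union over coordinates combines cleanly with the factor $e^{6D}$ from the net over parameters. Once these are reconciled, the stated tail bound follows from routine algebraic manipulation of the pointwise estimate.
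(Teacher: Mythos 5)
Your overall route is the paper's: a pointwise-in-$\mathbf f$ bound obtained from vector-valued concentration (your invocation of Theorem \ref{t-vector} is exactly the paper's ``Theorem \ref{t-con} plus a union bound over the $p$ coordinates'', after the same reduction to $\omega_1=\cdots=\omega_p=1$ by scaling and the same use of $\bK(x,x)\ge \rho m/n$ to convert $\|f_i/\bK\|_\infty$ into $\|f_i\|_\infty$), followed by the uniformization over $\mathcal F$ taken from the proof of Theorem \ref{t:core-union} under \eqref{eq:finite-dim}.

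The gap is in how you describe that uniformization. A net ``of cardinality $e^{6D}$ with $\eta$ tuned proportionally to $\varepsilon$'' is not coherent: an $\eta$-net of a bounded $D$-dimensional set has cardinality of order $(\mathrm{diam}/\eta)^D$, so an $\varepsilon$-proportional mesh gives a prefactor $(C/\varepsilon)^{D}$ --- that is the $-D\log\varepsilon$ term of the parametrized case \eqref{eq:parametrize}--\eqref{eq:lipschitz}, not the $\varepsilon$-free $e^{6D}$ claimed here. To get $e^{6D}$, the paper takes a net at the \emph{constant} scale $1/2$, but in the Minkowski norm whose unit ball is the symmetric convex hull $\mathcal B$ of $\{\lambda f:\ |\lambda|\le 1,\ f\in\mathcal F\}$, and exploits linearity of $\mathcal L(f)=n^{-1}(L_{\mathcal S}(f)-L(f))$ through the self-bounding inequality $\sup_{\mathcal B}|\mathcal L|\le \sup_{\text{net}}|\mathcal L|+\tfrac12\sup_{\mathcal B}|\mathcal L|$, hence $\sup_{\mathcal B}|\mathcal L|\le 2\sup_{\text{net}}|\mathcal L|$; that factor $2$ is precisely what turns $4AV$ into $16AV$. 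Your phrase ``absorb into $\varepsilon/2$'' gestures at this, but without the self-bounding step you have no a priori control of the discretization error: the weights $1/\bK(x,x)$ can be as large as $n/(\rho m)$ and no bound on $|\mathcal S|$ is assumed in the \eqref{eq:finite-dim} case, so a deterministic Lipschitz-in-coefficients bound is not available. Moreover, the net points must lie in $\mathcal B$, and one must check --- as the paper does via convexity --- that $\sup_{\mathcal B}\Var{}{L_{\mathcal S}(f)}=\sup_{\mathcal F}\Var{}{L_{\mathcal S}(f)}$ and $\sup_{\mathcal B}\|f\|_\infty=\sup_{\mathcal F}\|f\|_\infty$ (coordinatewise, for the vector case), so that the pointwise bound applies at the net points with the same $V$ and the same admissible range of $\varepsilon$; a Euclidean net over coefficients in an arbitrary basis carries none of these guarantees. (Your bookkeeping replacing $\|\mathbb V(\Phi)\|_\omega^2$, a sum of $p$ variances, by the coordinatewise maximum $V$ is at the same level of precision as the paper's own proof, so I do not count it against you.)
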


\subsection{Application: Discretized multivariate OPE.} 
\label{sec:application}
We revisit Example \ref{eg:remi-subhro}.
It has been shown in \cite{bardenet2021sgddpp} that sampling with the DPP $\mathcal S$ constructed in this example yields significant variance reduction for a wide class of linear statistics on $\mathcal X$. To be more precise, in their setting, $\mathcal X = \{x_1,\ldots,x_n\}$ is a random data set, where $x_i$'s are i.i.d. samples from a distribution $\gamma$ with support inside a $d$-dimensional hypercube; $\tilde \gamma$ is a density estimator for $\gamma$ and $q(x)dx$ is a reference measure on that hypercube. The DPP $\mathcal S$ is then defined by the kernel $\bK$ in \ref{eg:remi-subhro} w.r.t. the empirical measure $n^{-1} \sum_{x\in \mathcal X} \delta_x$. We normalize the kernel by setting $\hat \bK:= n^{-1} \bK$, so that $\mathcal S$ is a DPP with kernel $\hat \bK$ w.r.t. the counting measure on $\mathcal X$.
Then, under some mild assumptions on $\tilde \gamma$ and $q(x)dx$, with high probability in the data set $\mathcal X$, we have $\Var{}{n^{-1}L_{\mathcal S}(f)} = \mathcal O(m^{-(1+1/d)})$ for any test function $f$ satisfying some mild regularity conditions. For more details, we refer the reader to \cite{bardenet2021sgddpp}.


   The significant reduction on the variance of linear statistics motivates us to apply Theorem \ref{t:core-union} to this setting.
   We also remark that all assumptions on the kernel in Theorem \ref{t:core-union} are satisfied for $\hat \bK$ with high probability in the data set $\mathcal X$ (see Appendix \ref{app:thm6}). Let $\mathcal F$ be a family of test functions on $\mathcal X$ satisfying regularity conditions as in \cite{bardenet2021sgddpp}. Then we can state:
\begin{theorem} \label{c:ope-core}
    For $\varepsilon= \mathcal O(m^{-1/d})$, w.h.p. in the data set $\mathcal X$, we have 
    \[\mathbb P_{\mathcal S} 
    \Big (\exists f\in \mathcal F: \Big|\frac{ L_{\mathcal S}(f)}{L(f)} -1  \Big | \ge \varepsilon \Big ) 
    \le 2\exp \Big (6 D-C'\varepsilon^2 m^{1+1/d} \Big), \quad \text{ assuming \eqref{eq:finite-dim} },\]
    and 
    \[\mathbb P_{\mathcal S} 
    \Big (\exists f\in \mathcal F: \Big|\frac{ L_{\mathcal S}(f)}{L(f)} -1  \Big | \ge \varepsilon \Big ) 
    \le 2\exp \Big (C D- D \log \varepsilon -C' \varepsilon^2 m^{1+1/d} \Big), \text{ assuming \eqref{eq:parametrize}, \eqref{eq:lipschitz}},\]
    where $\mathbb P_{\mathcal S}$ indicates the randomness only in $\mathcal S$ and $C,C'>0$ do not depend on $m,n$. 
\end{theorem}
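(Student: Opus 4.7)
The plan is that Theorem \ref{c:ope-core} is, by design, a direct corollary of Theorem \ref{t:core-union} once we feed in the two pieces of input that Example \ref{eg:remi-subhro} and \cite{bardenet2021sgddpp} supply: (i) a lower bound $\hat{\bK}_{ii} \ge \rho \cdot m/n$ on the diagonal of the normalized kernel, and (ii) a sharp variance bound $\Var{}{n^{-1} L_{\mathcal S}(f)} = \mathcal O(m^{-(1+1/d)})$ valid uniformly over the regularity class $\mathcal F$. The strategy is therefore to verify the hypotheses, set $V$ to match the variance rate, and then read off the range of admissible $\varepsilon$ and the exponent.

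First I would check the structural hypotheses on $\hat{\bK}$. Because $\hat{\bK}$ is obtained by projecting a rescaled OPE kernel onto its top-$m$ eigenspace, it is a projection DPP, so $|\mathcal S| = m$ almost surely; in particular the condition $|\mathcal S| \le B\cdot m$ needed for the parametrized case in Theorem \ref{t:core-union} holds trivially with $B=1$. The diagonal lower bound $\hat{\bK}_{ii} \ge \rho\cdot m/n$ with high probability over $\mathcal X$ is the main technical input and is the content of Appendix \ref{app:thm6}: it follows from concentration of the empirical measure $n^{-1}\sum_x \delta_x$ around $\gamma$, combined with uniform control on the diagonal of the reweighted kernel $K^{(m)}_{q,\tilde\gamma}(x,x) = q(x) K^{(m)}_q(x,x)/\tilde\gamma(x)$ and on the gap between the top-$m$ eigenvalues and the rest.

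Next I would invoke the variance estimate of \cite{bardenet2021sgddpp} to set $V := C_1 m^{-(1+1/d)}$ for a suitable constant $C_1 = C_1(\mathcal F, q,\tilde\gamma) > 0$ that is valid uniformly over $\mathcal F$ with high probability in $\mathcal X$. Plugging $V$ into the range condition of Theorem \ref{t:core-union} gives
\[
    \varepsilon \;\le\; \frac{4A\rho\, m V}{3c\, \sup_{f\in\mathcal F}\|f\|_\infty} \;=\; \mathcal O\!\bigl(m\cdot m^{-(1+1/d)}\bigr) \;=\; \mathcal O(m^{-1/d}),
\]
which explains the stated scale for $\varepsilon$. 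Substituting $V$ into the exponents $-c^2\varepsilon^2/(16AV)$ produces $-C'\varepsilon^2 m^{1+1/d}$ with $C' = c^2/(16AC_1)$; the prefactors $2\exp(6D)$ in the vector-space case and $2\exp(CD - D\log\varepsilon)$ in the parametrized case are inherited verbatim from Theorem \ref{t:core-union}, yielding exactly the two displays of Theorem \ref{c:ope-core}.

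The main obstacle is really the diagonal lower bound $\hat{\bK}_{ii} \ge \rho\cdot m/n$, which does not come for free from the OPE structure and requires careful probabilistic control on $\mathcal X$; this is where the high-probability nature of the statement is used. Everything else is bookkeeping: the variance rate is cited from \cite{bardenet2021sgddpp}, the projection property gives the fixed-cardinality assumption, and the remaining algebraic steps are substitutions into the bounds provided by Theorem \ref{t:core-union}.
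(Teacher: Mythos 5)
Your proposal is correct and follows essentially the same route as the paper: a direct application of Theorem \ref{t:core-union} with $V = \mathcal{O}(m^{-(1+1/d)})$ from \cite{bardenet2021sgddpp}, the projection property of $\hat{\bK}$ giving $|\mathcal S| = m$ a.s., and the diagonal condition $\hat\bK_{ii} \gtrsim m/n$ verified w.h.p. in $\mathcal X$ via the kernel asymptotics and uniform concentration arguments of that reference. The reading-off of the $\varepsilon$-range $\mathcal O(m^{-1/d})$ and of the exponents matches the paper's bookkeeping.
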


\begin{remark} \label{r:ope-core}
    Theorem \ref{c:ope-core} confirms the discussion in Remark \ref{rmk:rate} for this particular example of DPP. More precisely, Theorem \ref{c:ope-core} implies that, with probability tending to $1$, sampling with this DPP gives
    $|L_{\mathcal S}(f)/L(f) -1 | \le m^{-(\frac{1}{2}+\frac{1}{2d})^-}, \:\forall f \in \mathcal F$, where $(\frac{1}{2}+\frac{1}{2d})^-$ denotes any positive number strictly smaller than $\frac{1}{2}+\frac{1}{2d}$.
    Meanwhile, for i.i.d. sampling, the accuracy rate $\varepsilon$ is at best $m^{-1/2}$. 
\end{remark}    
\begin{remark}    
    It may be noted that, DPPs being Hilbert space-based models, they interact well with linear projection based methods. As such, our method can be applied on dimensionally reduced data, wherein the $d$ in Remark \ref{r:ope-core} can be taken to be the reduced dimension, which is usually quite small. As such, the improvement in the approximation guarantees is substantial, especially for large scale problems entailing large $m$.
\end{remark}

  \section{Experiments}
  \label{s:experiments}
  In this section, we compare randomized coresets for the $k$-means problem of Example~\ref{eg:k-means}, on different datasets.
One virtue of $k$-means as a benchmark is that asymptotically tight upper-bound on sensitivity \eqref{e:sensitivity_bound} can easily be computed \cite[Lemma 2.2]{bachem2017coresetML}.

\subsection{Competing approaches.}
We compare 6 different coreset samplers.\footnote{The link to a GitHub repository is temporarily hidden for anonymity.}
Each sampler takes as input a finite dataset $\cX$, an integer $m$, and sampler-specific parameters. 
It returns a random subset $\cS\subset\cX$ of cardinality $m$.
For the associated weight function $\omega$ in \eqref{e:linear_statistic}, we always take the inverse of the marginal probability of inclusion, i.e. $\omega(x) = 1/\mathbb{P}(x\in\mathcal{S})$. 

The first two baselines use independent sampling. 
The \texttt{uniform} method returns $m$ samples from $\cX$, uniformly and without replacement, and runs in $\cO(m)$.
The second method, \texttt{sensitivity}, is specific to the $k$-means problem.
It corresponds to the classical sensitivity-based importance sampling coreset of \cite{langberg2010_universal_approximator} described in Section~\ref{s:background}.
It runs in $\cO(nk+nm)$.

The rest of the methods use negative dependence. 
The third method, termed \texttt{G-mDPP}, uses an $m$-DPP sampler where the likelihood kernel is a Gaussian kernel, with adjustable bandwidth denoted by $h$. 
It is basically Algorithm 1 of \cite{tremblay2019determinantal}, except we do not approximate the likelihood kernel using random features. 
We prefer avoiding approximations in this paper to isolatedly probe the benefit of negative dependence, but our choice comes at the cost $\cO(n^3)$ of performing SVD as a preprocessing, in addition to the usual $\cO(nm^2)$ sampling time. 
Similarly, we compute the marginal probabilities of inclusion of $m$-DPPs exactly, via Equation (205) and Algorithm 7 of \cite{kulesza2012_dpp_for_ml}. 
These costly steps will likely be approximated in real data applications; see the discussion of complexity to Section~\ref{s:discussion}.
The fourth method, \texttt{OPE}, is the discretized OPE of \ref{eg:remi-subhro}.
We take $q$ to be a product of univariate beta pdfs, with parameters tuned to match the marginal moments of the dataset, as in \cite{bardenet2021sgddpp}.
We take $\tilde\gamma$ to be a kernel density estimator (KDE) built on $\mathcal{X}$, using the Epanechnikov kernel, with Scott's bandwidth selection method, as implemented in the \texttt{scikit-learn} package \cite{scikit-learn}.
When KDE estimation is precomputed as in our experiments, the method runs in $\cO(nm^2)$, and $\cO(n^2+nm^2)$ otherwise.
Note that there is no cubic power of $n$, as one can perform the eigenvalue thresholding in \ref{eg:k-means} by a reduced SVD of the $m\times n$ feature matrix $(p_k(x_i))$.
The fifth method, termed \texttt{Vdm-DPP}, is Algorithm 2 of \cite{tremblay2019determinantal}, which runs in $\cO(nm^2)$.
It is an OPE in the sense of \ref{eg:mope}, but where the reference measure $\mu$ is the discrete empirical measure of the dataset.  
Although we have no result on how its linear statistics scale, its similarity with the discretized OPE, as well as its numerical performance in the experiments of \cite{tremblay2019determinantal}, make us expect \texttt{Vdm-DPP} to behave similarly to \texttt{OPE}.
The sixth method, \texttt{stratified}, is a stratified sampling baseline limited to the case where $\cX\subset[-1,1]^d$ and $\cX$ is ``well-spread".
It partitions $[-1,1]^d$ into a grid of $m$ bins, and then independently draws one element uniformly in the intersection of $\cX$ with each bin.
It is a special case of projection DPP, which runs in $\cO(nm)$ and has obvious pitfalls, like requiring that $\cX$ has a non-empty intersection with each bin, which is unlikely to be the case for non-uniformly spread datasets and high dimensions. 
Yet, this is a simple solution that one would likely implement to probe the benefits of negative dependence.

\subsection{The performance metric.}
To investigate the cardinality of a coreset for a given error, we let $Q_\cS$ denote the quantile function of $\sup |\estloss{\mathcal{S}}{\query}-\loss{\query}|/\loss{\query}$, the supremum over all queries of the relative error. 
Intuitively, $Q_\cS(0.9) = 10^{-2}$ means that $90\%$ of the sampled coresets have a worst case relative error below $10^{-2}$.  
We shall look at how an estimated $Q_\cS(0.9)$ varies with $m$, especially its slope in log-log plots with respect to $m$. 
Now, the set $\cF$ of all queries for $k$-means in combinatorially large, even for small values of $k$. Therefore, each time we need to evaluate the supremum of the relative error, we rather uniformly sample without replacement $k$ elements of $\cX$, $100$ times and independently, and we take the maximum value of the relative error among these $100$ values. 
Moreover, for each method and each coreset size $m$, the quantile function $Q_\cS(0.9)$ is estimated by an empirical quantile over 100 independent coresets sampled for each value of $m$.

\subsection{Results.}
We first consider a synthetic dataset of $n=1024$ data points, sampled uniformly and independently in $[-1,1]^d$; see Figure~\ref{f:uniform_dataset}. 
We consider $d=2$ for demonstration purposes, but we have observed similar results for other small dimensions.
Figure~\ref{f:uniform_results_all_methods} depicts our estimate of $Q_\mathcal{S}(0.9)$ as a function of the coreset size $m$, in log-log format.
The two i.i.d. baselines decrease as $m^{-1/2}$, as expected.
The stratified baseline, intuitively well-suited to uniformly-spread datasets, outperforms all other methods with a $m^{-1}$ rate, consistent with its known optimal variance reduction  \cite{novak1988_deter_stoch_error_bounds}.
Finally, the $m$-DPP and the two DPPs also yield a faster decay, eventually outperforming the i.i.d. baselines as $m$ grows.
This is expected for the discretized OPE, as it follows from the theoretical results from Section~\ref{s:theoretical}; but it is interesting to see that the Gaussian $m$-DPP and the \texttt{Vdm-DPP} seem to reach a similar $m^{-3/4}$ fast rate. 
For the Gaussian $m$-DPP, however, the performance depends on the value of the bandwidth of the Gaussian kernel: in Figure~\ref{f:uniform_results_kDPPs}, we see that the rate of decay can go from i.i.d.-like to OPE-like as the bandwidth increases; this is expected from results like \cite{barthelme2023determinantal}. 
Note that the color code of Figure~\ref{f:uniform_results_kDPPs} differs from other figures.

\begin{figure}[!ht]
	\centering
	\subfloat[Uniform dataset and OPE sample\label{f:uniform_dataset}]{
		\includegraphics[width=.32\textwidth]{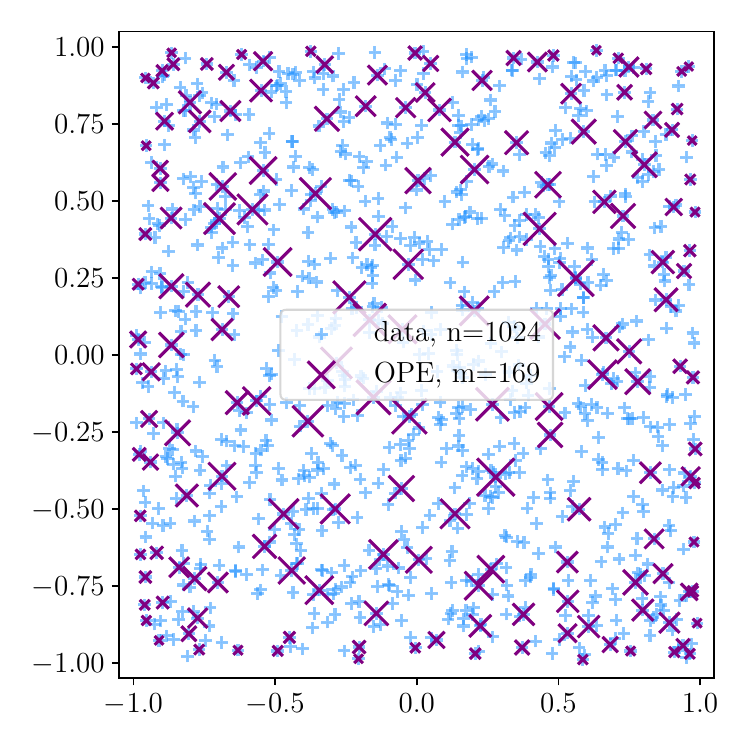}
    }
	\subfloat[$Q_\cS(0.9)$ vs. coreset size $m$\label{f:uniform_results_all_methods}]{
        \includegraphics[width=.32\textwidth]{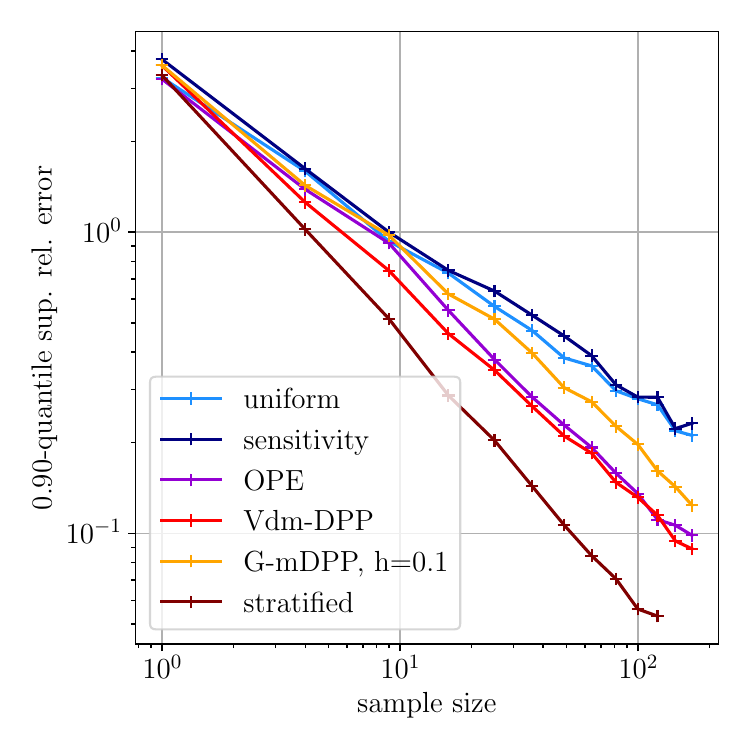}
    }
	\subfloat[$Q_\cS(0.9)$ vs. coreset size $m$\label{f:uniform_results_kDPPs}]{
        \includegraphics[width=.32\textwidth]{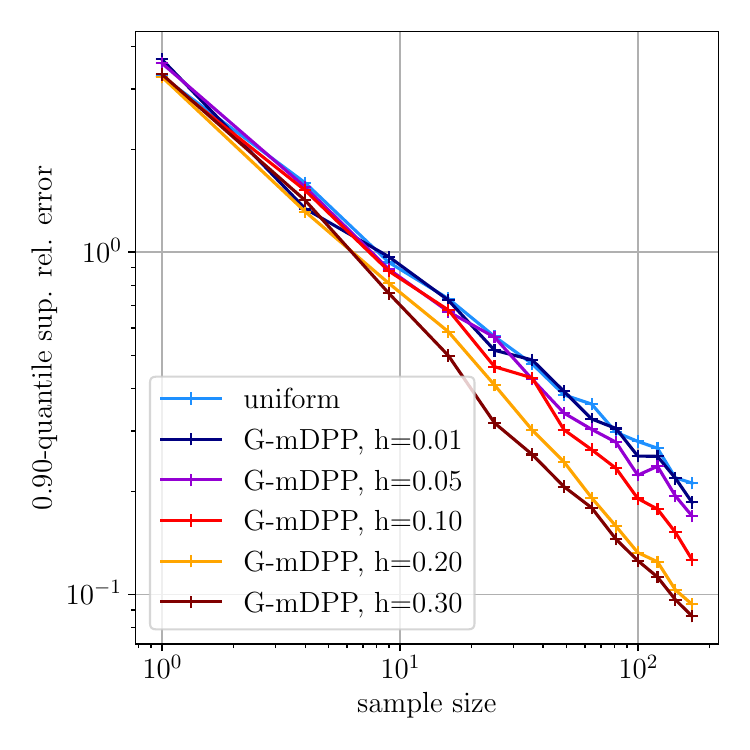}
    }
    \caption{Results for the uniform dataset.}
    \label{f:uniform_results}
\end{figure}

In the uniform dataset of \ref{f:uniform_dataset}, the sensitivity function is almost flat, which makes \texttt{sensitivity} behave like \texttt{uniform}.
To give an edge to \texttt{sensitivity}, we now consider the trimodal dataset shown in \ref{f:trimodal_dataset}, with an OPE sample superimposed. 
The performance of \texttt{sensitivity} improves; see Figure~\ref{f:trimodal_results_all_methods}, while the determinantal samplers still outperform the independent ones thanks to a faster decay. 
For this dataset, it is not easy to stratify, and we thus do not show results for \texttt{stratified}. We note that the size of a marker placed at $x$ is proportional to the corresponding weight $1/K(x,x)$ in the estimator of the average loss. Equivalently, the marker size is inversely proportional to the marginal probability of $x$ being included in the DPP sample.

Finally, we consider the classical MNIST dataset, after a PCA of dimension $4$. 
Figure~\ref{f:mnist_results_all_methods} shows again the faster decay of the performance metric for the two DPPs (\texttt{OPE} and \texttt{Vdm-DPP}), compared to the two independent methods. 
However, the advantage progressively disappears as the dimension increases beyond $4$ (unshown), as expected from the gain in variance of the discretized multivariate OPE, which becomes negligible when $d\gg 1$; see \ref{s:discussion} for suggestions on how to prove a dimension-independent decay.
The source code used in this work is available at \href{https://github.com/hsimonfroy/DPPcoresets}{\texttt{github.com/hsimonfroy/DPPcoresets}}, where DPP samplers are built upon the Python package \texttt{DPPy} \cite{gautier_2019_dppy}.

\begin{figure}[!ht]
    \centering
    \subfloat[Trimodal dataset and OPE sample\label{f:trimodal_dataset}]{
		\includegraphics[width=.32\textwidth]{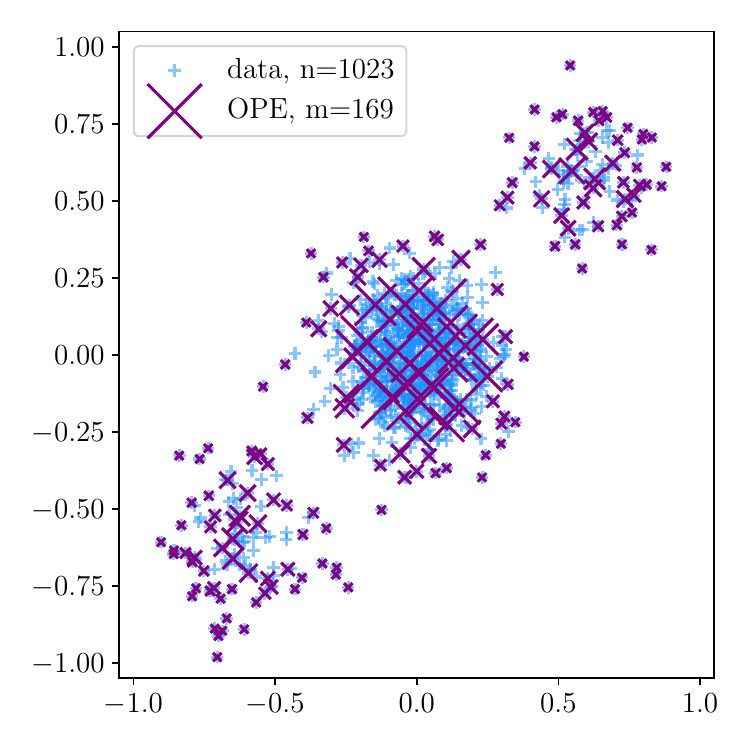}
    }
    \subfloat[$Q_\cS(0.9)$ vs. coreset size $m$\label{f:trimodal_results_all_methods}]{
	    \includegraphics[width=.32\textwidth]{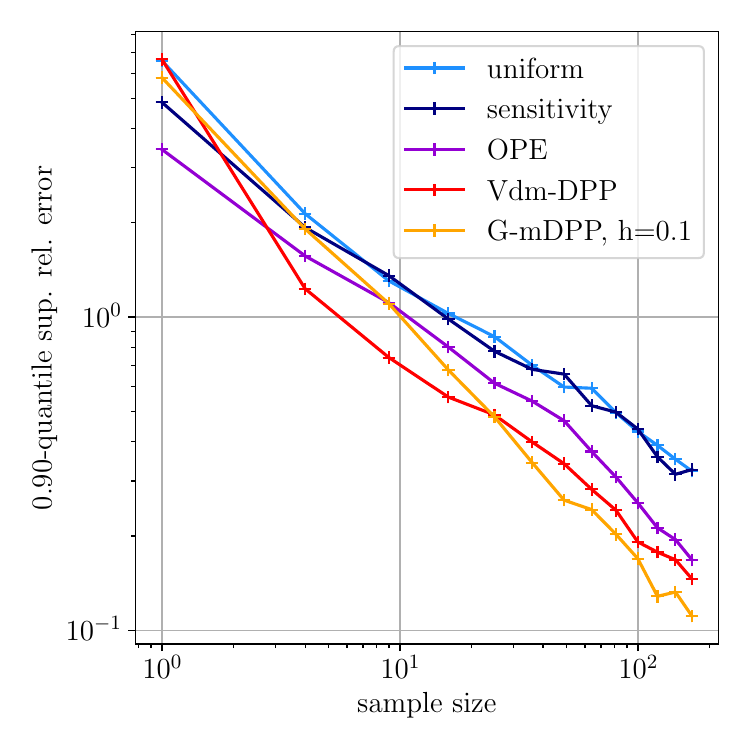}
    }
    \subfloat[$Q_\cS(0.9)$ vs. coreset size $m$\label{f:mnist_results_all_methods}]{
	    \includegraphics[width=.32\textwidth]{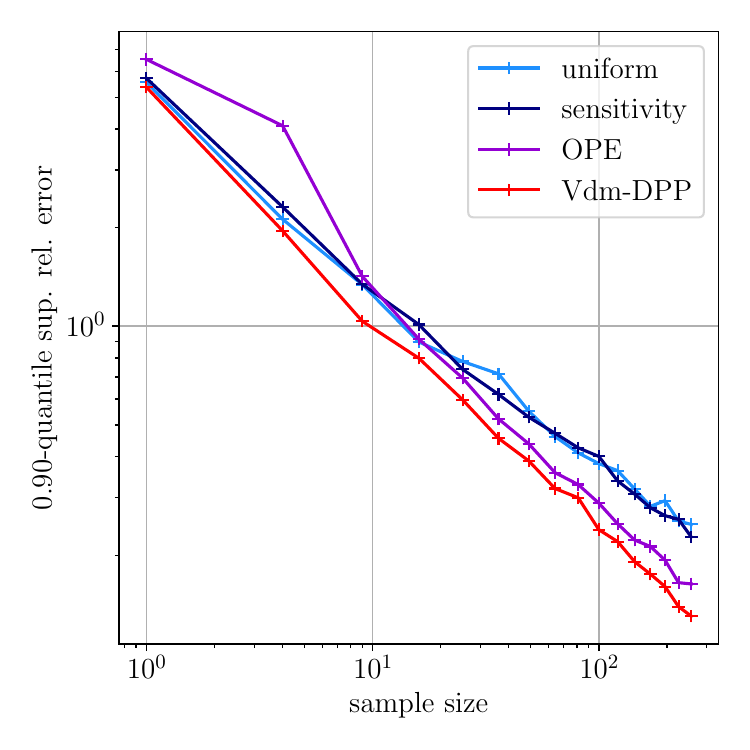}
    }
    \caption{Results on other datasets.}
    \label{f:other_results}
\end{figure}

  \section{Discussion}
  \label{s:discussion}
  \subsection{Limitations.}
Our paper is a theoretical contribution, and our approach has several limitations before it can be a \emph{practical} addition to the coreset toolbox.
The improvement over independent sampling relies on a variance scaling for linear statistics of a particular DPP, which itself relies on both 1) an Ansatz that the dataset was generated i.i.d. from some pdf $\gamma$ with a large support, and 2) the availability of a good approximation to $\gamma$; see \ref{s:theoretical} and \cite{bardenet2021sgddpp}.
While Item 1) is usually deemed to be reasonable in a wide range of situations, we solve Item 2) by relying on a kernel density estimator, which is costly to manipulate. 
Another limitation is that the improvement over independent sampling is in $1/d$ and thus progressively vanishes as the dimension increases.
Finally, a classical caveat is that although tractable, sampling a DPP still costs $\mathcal{O}(nm^2)$, provided the kernel is available in diagonalized form. 

\subsection{Future work.} The limitations above set up a research program. 
In particular, an intriguing observation in our empirical studies is the comparative performance of various DPP-based coreset samplers; several of them exhibit effective performance. 
While we have sharp theoretical guarantees for the discretized OPE-based scheme, obtaining similar guarantees and parameter-tuning protocols for other samplers, like $m$-DPPs, will be of great practical interest as they would bypass the need, e.g., for an approximation to the data-generating mechanism $\gamma$.
The DPP called \texttt{Vdm-DPP} in \ref{s:experiments}, which is itself an OPE for a discrete measure, might be a bridge between OPEs and $m$-DPPs, as \texttt{Vdm-DPP} can be seen as a limit of Gaussian $m$-DPPs \cite{barthelme2023determinantal}.
On a more general note, improving the computational complexity of sampling DPPs remains an active topic, and we should examine which techniques, e.g. by leveraging low-rank structures, preserve the small coreset property. 
Any breakthrough in the complexity of DPP sampling would also have salutary consequences for the broader program of negative dependence as a toolbox for machine learning. 
On the dependence of the rate to the dimension, we propose to investigate the impact of smoothness of the test functions on the rate: in numerical integration with mixtures of DPPs, smoothness does bring dimension-independent rates \cite{belhadji2020kernel}.
Finally, in a more theoretical direction, extending concentration inequalities for linear statistics beyond the restricted range of $\epsilon$ appearing e.g. in Theorem \ref{t-con} is a mathematically challenging problem, with potential learning-theoretic consequences.

  \section{Appendix}
  \subsection{Proof of Theorem \ref{t-con}}
We present here the proof for the general setting, i.e., when $\mathcal X$ is a Polish space and $\mathcal S$ is a DPP with a Hermitian kernel $K$ w.r.t a background measure $\mu$. By abuse of notation, we will also denote by $K$ the integral operator
\[ K: L^2(\mathcal X, \mu) \rightarrow L^2(\mathcal X,\mu) \quad,\quad f(x) \mapsto \int_\mathcal X K(x,y) f(y) \de \mu (y).\]
We denote by $C_k(\varphi), k \ge 1$ the cumulants of $\Lambda(\varphi)$, i.e.
 \[\log \mathbb E[e^{t \Lambda(\varphi)} ] = \sum_{k \ge 1} \frac{C_k(\varphi)}{k!} t^k, \quad  \text{ for $t$ near } 0. \]
 Note that $C_1(\varphi) = \mathbb E[ \Lambda(\varphi) ] , C_2(\varphi) = \Var{}{ \Lambda(\varphi)}$. In general, we have the formula (see \cite{lambertdpps})
\begin{equation} \label{eq:cumulant}
C_k(\varphi) = \sum_{q=1}^k {(-1)^{q+1} \over q} \sum_{\overset{k_1,\ldots,k_q \ge 1}{k_1 + \ldots + k_q =k}} {k! \over k_1! \ldots k_q!} \TR[ \Phi^{k_1} K \ldots \Phi^{k_q} K],
\end{equation}
where $\Phi: L^2(\mathcal X, \mu) \rightarrow L^2(\mathcal X,\mu)$ is the operator $f(x) \mapsto \varphi(x)f(x)$.

By the Macchi-Soshnikov theorem \cite{Mac72,Sos00}, $0 \preceq K \preceq I$, and we can write
\begin{eqnarray*}
C_2(\varphi) = \TR[\Phi (I-K) \Phi K] = \TR[\sqrt K \Phi (I-K) \Phi \sqrt K] =\| \sqrt{I-K} \Phi \sqrt K \|_{\HS}^2,
\end{eqnarray*}
where $\| \cdot \|_{\HS}$ denotes the Hilbert-Schmidt norm of an operator.

\begin{lemma} \label{lm:HS}
For $k\ge 1$, we have
\[ \| \sqrt{I-K} \Phi^k \sqrt K \|_{\HS} \le k \|\varphi\|_\infty^{k-1} \|\sqrt{I-K} \Phi \sqrt K\|_{\HS},\]
where $\|\varphi\|_\infty := \sup_{x\in \mathcal X} |\varphi(x)|$.
\end{lemma}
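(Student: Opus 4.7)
The plan is to compute $\|\sqrt{I-K}\Phi^k\sqrt{K}\|_\HS^2$ explicitly in terms of the integral kernel $K(x,y)$, massage it into a manifestly symmetric sum of nonnegative terms that mirror the $k=1$ formula, and then reduce the lemma to a pair of elementary pointwise inequalities on $\varphi$.

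First, I would use the self-adjointness of $\Phi$ (since $\varphi$ is real-valued) and the cyclicity of the trace to write
\[\|\sqrt{I-K}\Phi^k\sqrt{K}\|_\HS^2 = \TR[K\Phi^k(I-K)\Phi^k] = \TR[K\Phi^{2k}] - \TR[K\Phi^k K\Phi^k],\]
and then evaluate each trace via its kernel representation, using $(K\Phi^k)(x,y)=K(x,y)\varphi(y)^k$ and the Hermiticity $K(y,x)=\overline{K(x,y)}$. This gives
\[\|\sqrt{I-K}\Phi^k\sqrt{K}\|_\HS^2 = \int \varphi(x)^{2k} K(x,x)\,\de\mu(x) - \iint \varphi(x)^k\varphi(y)^k\, |K(x,y)|^2 \,\de\mu(x)\,\de\mu(y).\]

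Next I would invoke the Macchi--Soshnikov condition $0\preceq K\preceq I$, which yields the diagonal identity $K^2(x,x)=\int |K(x,y)|^2\,\de\mu(y)$ together with the nonnegativity of $(K-K^2)(x,x)$. Splitting $K=(K-K^2)+K^2$ in the first integral and symmetrising the resulting double integral in $(x,y)$ produces the clean identity
\[\|\sqrt{I-K}\Phi^k\sqrt{K}\|_\HS^2 = \int \varphi(x)^{2k}(K-K^2)(x,x)\,\de\mu(x) + \tfrac{1}{2}\iint (\varphi(x)^k-\varphi(y)^k)^2\, |K(x,y)|^2 \,\de\mu(x)\,\de\mu(y).\]
Specialising to $k=1$ yields the analogous decomposition for $\|\sqrt{I-K}\Phi\sqrt{K}\|_\HS^2$, in which both summands are manifestly nonnegative.

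To close, I would bound the two summands separately using $\varphi(x)^{2k}\le \|\varphi\|_\infty^{2(k-1)}\varphi(x)^2$ and, via the telescoping factorisation $\varphi(x)^k-\varphi(y)^k = (\varphi(x)-\varphi(y))\sum_{j=0}^{k-1}\varphi(x)^j\varphi(y)^{k-1-j}$, the inequality $(\varphi(x)^k-\varphi(y)^k)^2 \le k^2 \|\varphi\|_\infty^{2(k-1)}(\varphi(x)-\varphi(y))^2$. Since $k\ge 1$ and both summands are nonnegative, these give $\|\sqrt{I-K}\Phi^k\sqrt{K}\|_\HS^2 \le k^2\|\varphi\|_\infty^{2(k-1)} \|\sqrt{I-K}\Phi\sqrt{K}\|_\HS^2$, and the lemma follows upon taking square roots. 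The only delicate bookkeeping is the symmetrisation step producing the squared difference $(\varphi(x)^k-\varphi(y)^k)^2$; once that kernel-level identity is in place, everything else is a purely pointwise estimate, so I do not anticipate any real obstacle.
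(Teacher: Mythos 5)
Your proposal is correct and follows essentially the same route as the paper's own proof: the same trace identity $\TR[\Phi^{2k}K]-\TR[\Phi^k K\Phi^k K]$, the same symmetrised kernel decomposition into a nonnegative diagonal term plus $\tfrac12\iint(\varphi(x)^k-\varphi(y)^k)^2|K(x,y)|^2$, and the same two pointwise bounds (the $\|\varphi\|_\infty^{2(k-1)}$ factor on the diagonal term and the telescoping factorisation on the squared difference), with $0\preceq K\preceq I$ supplying nonnegativity exactly as in the paper.
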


\begin{proof}
One has
\begin{eqnarray*}
\| \sqrt{I-K} \Phi^k \sqrt{K} \|_{\HS}^2 &=& \TR [\sqrt{K} \Phi^k (I-K) \Phi^k \sqrt{K}] \\
&=& \TR[\Phi^k (I-K) \Phi^k K ]\\
&=& \TR[\Phi^{2k} K ] - \TR[\Phi^k K \Phi^k K ] \\
&=& \int \varphi(x)^{2k} K(x,x) \de \mu(x) - \iint \varphi(x)^k K(x,y) \varphi(y)^k K(y,x) \de \mu(x) \de \mu(y) \\
&=& \int \varphi(x)^{2k} \Big (K(x,x) - \int K(x,y) K(y,x) \de \mu(y) \Big ) \de \mu(x)\\
 &+& {1\over 2} \iint (\varphi(x)^k - \varphi(y)^k)^2 K(x,y) K(y,x) \de \mu(x) \de \mu(y).
\end{eqnarray*}
Since $0 \preceq K \preceq I$, we have $K^2 \preceq K$, which implies $K(x,x) \ge \int K(x,y)K(y,x) \de \mu(y)$ for $\mu$-a.e. $x$. 
Thus,
\begin{eqnarray*}
    &&\int \varphi(x)^{2k} \Big (K(x,x) - \int K(x,y) K(y,x) \de \mu(y) \Big ) \de \mu(x) \\
    \le&& \|\varphi \|_\infty ^{2k-2} \int \varphi(x)^2 \Big (K(x,x) - \int K(x,y) K(y,x) \de \mu(y) \Big ) d\mu(x).
\end{eqnarray*}
On the other hand, by the symmetry of $K$, we have $K(x,y)K(y,x) = |K(x,y)|^2 \ge 0$ for all $x,y \in \mathcal X$. Note that 
$$|\varphi(x)^k - \varphi(y)^k| = |\varphi(x) - \varphi (y)|\Big | \sum_{j=0}^{k-1} \varphi(x)^j \varphi(y)^{k-1-j} \Big |\le k \|\varphi \|_\infty^{k-1} |\varphi(x) - \varphi(y)|.$$

Combining all ingredients, we deduce that 
\[\|\sqrt{I-K} \Phi^k \sqrt{K}\|_{\HS}^2 \le k^2 \|\varphi\|_\infty^{2k-2} \|\sqrt{I-K} \Phi \sqrt{K}\|_{\HS}^2,\] 
as desired.
\end{proof}

\begin{lemma}
For $k \ge 3$, we have
$${|C_k(\varphi)|\over k!} \le {1\over \sqrt{2\pi}} e^k k^{3/2} \|\varphi\|^{k-2}_\infty C_2(\varphi).$$
\end{lemma}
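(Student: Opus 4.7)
The overall strategy is to manipulate the cumulant formula \eqref{eq:cumulant} so that every trace on the right-hand side exposes an explicit $(I-K)$ factor; each such trace can then be bounded by a multiple of $\|\varphi\|_\infty^{k-2}\,C_2(\varphi)$ by combining Lemma \ref{lm:HS} with H\"older's inequality for Schatten norms, and a coarse combinatorial sum yields the announced estimate.

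The key algebraic identity is the telescoping relation
$$
\TR[X\Phi^{a}KY]-\TR[X\Phi^{a-1}K\Phi KY]=\TR[X\Phi^{a-1}(I-K)\Phi KY],
$$
valid for any trace-class $X,Y$ (it reflects $\Phi K-K\Phi K=(I-K)\Phi K$). Applying this identity iteratively across \eqref{eq:cumulant}, I would show that the alternating signs $(-1)^{q+1}/q$ together with the multinomial coefficients $k!/(k_1!\cdots k_q!)$ combine to collapse the signed sum into a combination of traces of the shape
$$
\TR\bigl[\Phi^{a_0}K\cdots\Phi^{a_{i-1}}K\cdot\Phi^{a_i}(I-K)\Phi K\cdot\Phi^{a_{i+1}}K\cdots\Phi^{a_s}K\bigr],
$$
each containing exactly one $(I-K)$ factor and satisfying $\sum_i a_i=k-1$. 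A concrete verification for small $k$ confirms the pattern: for instance $C_3=\TR[\Phi^2(I-K)\Phi K]-2\,\TR[\Phi(I-K)\Phi K\Phi K]$, and $C_4$ similarly yields four such terms, including one with an interior $(I-K)$ insertion of the form $-3\,\TR[\Phi^2 K\Phi(I-K)\Phi K]$.

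To bound each rewritten trace, I would use cyclic invariance of $\TR$ and split $(I-K)=\sqrt{I-K}\sqrt{I-K}$, organising the product into two Hilbert--Schmidt factors $\sqrt{K}\Phi^{a_i}\sqrt{I-K}$ and $\sqrt{I-K}\Phi\sqrt{K}$, together with factors of shape $\sqrt{K}\Phi^{a_j}\sqrt{K}$ controlled in operator norm. H\"older for Schatten norms then gives
$$
|\TR[\cdots]|\le \|\sqrt{K}\Phi^{a_i}\sqrt{I-K}\|_{\HS}\,\|\sqrt{I-K}\Phi\sqrt{K}\|_{\HS}\prod_{j\ne i}\|\sqrt{K}\Phi^{a_j}\sqrt{K}\|_{\Op}.
$$
By self-adjointness of $K$ and $I-K$ together with Lemma \ref{lm:HS}, the two HS factors are bounded by $a_i\|\varphi\|_\infty^{a_i-1}\sqrt{C_2(\varphi)}$ and $\sqrt{C_2(\varphi)}$ respectively; each operator-norm factor is at most $\|\varphi\|_\infty^{a_j}$. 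The exponents of $\|\varphi\|_\infty$ collect to exactly $(a_i-1)+\sum_{j\ne i}a_j=k-2$, so each rewritten trace is bounded by $k\,\|\varphi\|_\infty^{k-2}\,C_2(\varphi)$.

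Summing over the $(I-K)$-exposed terms with their signed coefficients, a coarse estimate on the number of compositions of $k$ and on the size of the multinomial coefficients yields $|C_k(\varphi)|\le k!\cdot\mathrm{poly}(k)\,\|\varphi\|_\infty^{k-2}\,C_2(\varphi)$, which after Stirling's formula comfortably fits inside the target $\frac{k!}{\sqrt{2\pi}}e^k k^{3/2}\,\|\varphi\|_\infty^{k-2}\,C_2(\varphi)$. The main obstacle I foresee is the explicit telescoping step: verifying that the signed sums in \eqref{eq:cumulant} collapse cleanly into $(I-K)$-exposed traces, including those with interior insertions, demands careful combinatorial bookkeeping to track how multinomial coefficients and signs recombine across different values of $q$. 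An alternative route would be to operate directly on the generating function $F(t)=\log\det(I+(e^{t\varphi}-1)K)$ via Cauchy estimates for the Taylor coefficients of $F(t)-tF'(0)-\tfrac{t^2}{2}C_2(\varphi)$ on a disk of radius $\sim 1/\|\varphi\|_\infty$, but any such path must still bound this remainder by $C_2(\varphi)$ times an explicit function of $t$, which ultimately reduces to the same $(I-K)$-exposure at the trace level.
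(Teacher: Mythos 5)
Your proposal is correct and follows essentially the same route as the paper: telescope the traces in \eqref{eq:cumulant} so that each term exposes an $(I-K)$ factor, bound every exposed trace by Schatten--H\"older together with Lemma \ref{lm:HS} (two Hilbert--Schmidt factors of the form $\sqrt{K}\Phi^{a}\sqrt{I-K}$ and $\sqrt{I-K}\Phi^{b}\sqrt{K}$, the rest in operator norm), and finish with a crude count of compositions. The one point where you diverge, and precisely the obstacle you flag, is the global collapse of the signed sum into $(I-K)$-exposed traces; the paper sidesteps this bookkeeping entirely by observing that $\sum_{q=1}^{k}\frac{(-1)^{q+1}}{q}\sum_{k_1+\cdots+k_q=k}\frac{k!}{k_1!\cdots k_q!}=0$, which lets one subtract $\TR[\Phi^k K]$ from every trace and then telescope each individual difference $\TR[\Phi^{k_1}K\cdots\Phi^{k_q}K]-\TR[\Phi^k K]$ by merging adjacent blocks from the right; this yields at most $q$ exposed terms per composition, each bounded by $k_{q-1}k_q\,\|\varphi\|_\infty^{k-2}C_2(\varphi)\le k^2\|\varphi\|_\infty^{k-2}C_2(\varphi)$, with no sign or coefficient recombination across different $q$. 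One small correction to your final counting step: the normalized sum $\sum_{q\ge 2}\sum_{k_1+\cdots+k_q=k}\frac{1}{k_1!\cdots k_q!}$ is exponential in $k$ (the paper bounds it by $k^k/k!\le e^k/\sqrt{2\pi k}$), not polynomial, so the conclusion is not ``$k!\cdot\mathrm{poly}(k)$''; the stated bound's factor $e^k k^{3/2}$ is there exactly to absorb this exponential count, and with that adjustment your argument lands on the same estimate.
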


\begin{proof}
We recall the formula \eqref{eq:cumulant},
observe that
\[\sum_{q=1}^k {(-1)^{q+1} \over q} \sum_{\overset{k_1,\ldots,k_q \ge 1}{k_1 + \ldots + k_q =k}} {k! \over k_1! \ldots k_q!} = 0.\]
Then one can write
\begin{eqnarray*}
C_k(\varphi) &=& \sum_{q=1}^n {(-1)^{q+1} \over q} \sum_{\overset{k_1,\ldots,k_q \ge 1}{k_1 + \ldots + k_q =k}} {k! \over k_1! \ldots k_q!} \Big ( \TR[ \Phi^{k_1} K \ldots \Phi^{k_q} K] - \TR [ \Phi^k K ] \Big ) \\
&=& \sum_{q=2}^n {(-1)^{q+1} \over q} \sum_{\overset{k_1,\ldots,k_q \ge 1}{k_1 + \ldots + k_q =k}} {k! \over k_1! \ldots k_q!} \Big ( \TR[ \Phi^{k_1} K \ldots \Phi^{k_q} K] - \TR [ \Phi^k K ] \Big ).
\end{eqnarray*}

For any $k_1, \ldots, k_q \ge 1$ such that $k_1 + \ldots + k_q = k$, we observe that
\begin{eqnarray*}
&& |\TR [\Phi^{k_1} K \ldots \Phi^{k_{q-2}}K \Phi^{k_{q-1} + k_q}K ] - \TR [ \Phi^{k_1} K \ldots \Phi^{k_{q-2}}K \Phi^{k_{q-1}}K \Phi^{k_q} K ] | \\
&=& | \TR[ \Phi^{k_1} K \ldots \Phi^{k_{q-2}}K \Phi^{k_{q-1}}(I-K) \Phi^{k_q}K] | \\
&=& | \TR[ \sqrt K \Phi^{k_1} K \ldots \Phi^{k_{q-2}}\sqrt{K} \sqrt{K} \Phi^{k_{q-1}}\sqrt{I-K} \sqrt{I-K} \Phi^{k_q} \sqrt K] | \\
&\le& \|  \sqrt K \Phi^{k_1} K \ldots \Phi^{k_{q-2}}\sqrt K \sqrt K \Phi^{k_{q-1}}\sqrt{I-K} \|_{\HS} 
\cdot \| \sqrt{I-K} \Phi^{k_q} \sqrt K \|_{\HS} \\
&\le&  \|  \sqrt K \Phi^{k_1} K \ldots \Phi^{k_{q-2}}\sqrt K\|_\Op \cdot \|\sqrt K \Phi^{k_{q-1}}\sqrt{I-K} \|_{\HS} \cdot \| \sqrt{I-K} \Phi^{k_q} \sqrt K \|_{\HS} \\
&\le&  \|  \sqrt K \Phi^{k_1} K \ldots \Phi^{k_{q-2}}\sqrt K\|_\Op \cdot k_{q-1} k_q \| \varphi\|_\infty^{k_{q-1} +k_q -2} \| \sqrt{I-K} \Phi \sqrt K \|_{\HS}^2 \\
&\le& k_{q-1}k_q \| \varphi\|_\infty^{k-2} C_2(\varphi),
\end{eqnarray*}
here we used Lemma \ref{lm:HS}, the fact that $0 \preceq K \preceq I$ ,$\| \Phi\|_\Op = \|\varphi\|_\infty$ and the $\|\cdot\|_\Op$ norm is submultiplicative. Since $k_j \le k$ for all $1\le j \le q$, using a telescoping argument gives
$$|\TR[\Phi^{k_1} K \ldots \Phi^{k_q} K ] - \TR[\Phi^k K] | \le q k^2 \|\varphi\|^{k-2}_\infty C_2(\varphi).$$

Hence
$$|C_k(\varphi)| \le \sum_{q=2}^k \sum_{\overset{k_1,\ldots,k_q \ge 1}{k_1 + \ldots + k_q =k}} {k! \over k_1! \ldots k_q!} k^2 \|\varphi \|^{k-2}_\infty C_2(\varphi).$$
Now observe that for $k\ge 3$
$$ \sum_{q=2}^k \sum_{\overset{k_1,\ldots,k_q \ge 1}{k_1 + \ldots + k_q =k}} {1 \over k_1! \ldots k_q!} < {k^k \over k!} \le {e^k \over \sqrt{2\pi k}} \cdot$$
Thus,
$${|C_k(\varphi)|\over k!} \le {1\over \sqrt{2\pi}} e^k k^{3/2} \|\varphi\|^{k-2}_\infty C_2(\varphi).$$
\end{proof}

Combining all ingredients above, one can show that.

\begin{lemma}
    For $|t| \le 1/(3\|\varphi\|_\infty)$, we have
    $$|\log \mathbb E[e^{t\Lambda(\varphi)}] - t \mathbb E[\Lambda(\varphi)]| \le A t^2 \Var{}{\Lambda(\varphi)},$$
    where $A>0$ is an universal constant.
\end{lemma}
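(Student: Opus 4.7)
The plan is to start from the cumulant expansion stated at the opening of the appendix,
\[
\log \mathbb{E}[e^{t\Lambda(\varphi)}] = \sum_{k \ge 1} \frac{C_k(\varphi)}{k!}\, t^k,
\]
and subtract the $k=1$ term. Since $C_1(\varphi) = \mathbb{E}[\Lambda(\varphi)]$ and $C_2(\varphi) = \Var{}{\Lambda(\varphi)}$, this gives
\[
\log \mathbb{E}[e^{t\Lambda(\varphi)}] - t\,\mathbb{E}[\Lambda(\varphi)] \;=\; \tfrac{1}{2}\,t^2\,\Var{}{\Lambda(\varphi)} \;+\; \sum_{k \ge 3} \frac{C_k(\varphi)}{k!}\,t^k.
\]
The $k=2$ summand is already of the required form, so the task reduces to bounding the tail starting at $k=3$ by a universal constant times $t^2\,\Var{}{\Lambda(\varphi)}$.

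The second step is to plug in the estimate from the preceding lemma, namely $|C_k(\varphi)|/k! \le (2\pi)^{-1/2}\, e^k k^{3/2}\,\|\varphi\|_\infty^{k-2}\,\Var{}{\Lambda(\varphi)}$ for $k \ge 3$. Setting $x := e\,|t|\,\|\varphi\|_\infty$, the tail is dominated by
\[
\frac{\Var{}{\Lambda(\varphi)}}{\sqrt{2\pi}\,\|\varphi\|_\infty^{2}} \sum_{k \ge 3} k^{3/2}\, x^{k}.
\]
Under the hypothesis $|t| \le 1/(3\|\varphi\|_\infty)$ we have $x \le e/3 < 1$, so the series converges; factoring out $x^3$ yields $\sum_{k \ge 3} k^{3/2} x^{k} \le x^{3}\, S$ with $S := \sum_{j \ge 0}(j+3)^{3/2}(e/3)^{j}$ a finite universal constant. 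Substituting back gives a bound of the form $C_1\,|t|^{3}\,\|\varphi\|_\infty\,\Var{}{\Lambda(\varphi)}$, which we convert into $(C_1/3)\, t^2\,\Var{}{\Lambda(\varphi)}$ by invoking $|t|\,\|\varphi\|_\infty \le 1/3$ one more time. Combined with the $k=2$ term this delivers the inequality with $A = 1/2 + C_1/3$.

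The only nontrivial point, rather than a hard calculation, is justifying that the cumulant expansion, introduced as a formal identity valid \emph{for $t$ near $0$}, holds as a genuine equality throughout the interval $|t| \le 1/(3\|\varphi\|_\infty)$. The bound from the preceding lemma in fact supplies this for free: it guarantees absolute convergence of the series whenever $e\,|t|\,\|\varphi\|_\infty < 1$, i.e.\ on $|t| < 1/(e\|\varphi\|_\infty)$, which strictly contains $[-1/(3\|\varphi\|_\infty),\,1/(3\|\varphi\|_\infty)]$ because $e < 3$. Hence the cumulant generating function and the series are both analytic in a common complex neighborhood of the real interval, agree near $0$, and therefore agree on the whole interval by the identity theorem. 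Once this analytic continuation step is spelled out, the remainder is the geometric-type estimate sketched above.
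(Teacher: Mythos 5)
Your proof is correct and follows essentially the same route as the paper: expand the cumulant generating function, isolate the $k=2$ term, and control the tail $k\ge 3$ via the bound $|C_k(\varphi)|/k! \le (2\pi)^{-1/2}e^k k^{3/2}\|\varphi\|_\infty^{k-2}\Var{}{\Lambda(\varphi)}$ together with $|t|\,\|\varphi\|_\infty\le 1/3$, yielding a universal constant $A$. Your additional remark justifying the validity of the cumulant expansion on the whole interval $|t|\le 1/(3\|\varphi\|_\infty)$ (via absolute convergence up to $1/(e\|\varphi\|_\infty)$ and the identity theorem) is a welcome extra precision that the paper leaves implicit, but it does not change the substance of the argument.
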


\begin{proof}
For $|t| \le {1\over 3\|\varphi\|_\infty}$, we have 
\begin{eqnarray*}
|\log \mathbb E[e^{t\Lambda(\varphi)}] - t\mathbb E[\Lambda(\varphi)] | &=& \Big | \sum_{k\ge 2}{C_k(\varphi) \over k!} t^k \Big | \\
&\le& \sum_{k\ge 2} {|C_k(\varphi)| \over k!} |t|^k \\
&\le& |t|^2 C_2(\varphi) \Big ( {1\over 2} + \sum_{k\ge 3} {1\over \sqrt{2\pi}} e^k k^{3/2} \|\varphi\|^{k-2}_\infty  |t|^{k-2} \Big ) \\
&\le& |t|^2 C_2(\varphi) \Big ( {1\over 2} +{1\over \sqrt{2\pi}} e^2 \sum_{k\ge 3} k^{3/2} (e/3)^{k-2} \Big ) \\
&=& A t^2 \Var{}{\Lambda(\varphi)},
\end{eqnarray*}
where $A>0$ is some universal constant.
\end{proof}

We can finish the proof of Theorem \ref{t-con} as follows.

\begin{proof}[Proof of Theorem \ref{t-con}]
Let $\varepsilon > 0$. We have 
\begin{eqnarray*}
    \log \mathbb P(\Lambda(\varphi) - \mathbb E[\Lambda(\varphi)] \ge \varepsilon ) &\le& \inf_{t} \Big ( \log \mathbb E[e^{t\Lambda(\varphi)}] - t \mathbb E[\Lambda(\varphi)] - t\varepsilon \Big ) \\
    &\le& \inf_t \Big ( - t \varepsilon + t^2A \Var{}{\Lambda(\varphi)} \Big )
\end{eqnarray*}
where the infimum is taken on $t \in (0,1/(3\|\varphi\|_\infty )]$.

\medskip

For $0\le \varepsilon \le \frac{2A\Var{}{\Lambda(\varphi)}}{3\|\varphi\|_\infty}$, choosing 
$$t_0 = \frac{\varepsilon}{2A\Var{}{\Lambda(\varphi)}}\le \frac{1}{3\|\varphi\|_\infty}$$
gives
$$\log \mathbb P(\Lambda(\varphi) - \mathbb E[\Lambda(\varphi)] \ge \varepsilon ) \le -\frac{\varepsilon^2}{4A\Var{}{\Lambda(\varphi)}} $$
as desired.
\end{proof}

\subsection{Proof of Theorem \ref{t-con:non-s}}
Denote by $\Phi$ the diagonal matrix $\diag(\varphi) \in \mathbb R^{n\times n}$.
For each $t\in \mathbb R$, we define 
$$\bG_t := \mathbf{I} - \exp (t \Phi) = - \sum_{k=1}^\infty \frac{t^k}{k!} \Phi^k.$$
By the Campbell formula, we have 
$\mathbb E[e^{t\Lambda(\varphi)}] = \det[\mathbf{I}-\bG_t \bK], t\in \mathbb R.$
By choosing $t\ge 0$ small enough such that $\|\bG_t \bK \|_\Op < 1$, one can expand
$$\log \det [\mathbf{I}- \bG_t \bK] = - \sum_{k=1}^\infty {1\over k} \TR[ (\bG_t \bK)^k].$$
Observe that $\|\bG_t\|_\Op \le e^{|t|\|\varphi\|_\infty} -1 \le 2|t| \|\varphi\|_\infty$ for all $|t| \le 1/(3\|\varphi\|_\infty )$. From now on, we will consider $t\ge 0$ such that 
$$0 \le t \|\varphi\|_\infty M \le \frac{1}{3} ,$$
where $M:= \max(\|\bK\|_\Op, 1)$.
This choice for $t$ will particularly imply that $\|\bG_t \bK \|_\Op \le 2/3$.

For $k=1$, we have
\begin{eqnarray*}
    -\TR[\bG_t\bK] &=& \sum_{p=1}^\infty \frac{t^p}{p!} \TR[\Phi^p \bK]\\
    &\le & t \mathbb E[\Lambda(\varphi)] + {t^2 \over 2} \TR[\Phi^2 \bK] 
    + \sum_{p \ge 3} \frac{t^p}{p!}| \TR[\Phi^p \bK]| \\
    &\le & t \mathbb E[\Lambda(\varphi)] + {t^2 \over 2} \TR[\Phi^2 \bK] 
    + \sum_{p \ge 3} \frac{t^p}{p!} \|\varphi\|_\infty^p \|\bK\|_* \\
    &\le& t \mathbb E[\Lambda(\varphi)] + {t^2 \over 2} \TR[\Phi^2 \bK] + t^3 \|\varphi\|_\infty^3 \|\bK\|_*.
\end{eqnarray*}

For $k=2$, we have
\begin{eqnarray*}
    -{1\over 2} \TR[(\bG_t\bK)^2 ]&=& -{1\over 2} \sum_{p,q\ge 1} \frac{t^{p+q}}{p!q!}\TR[\Phi^p \bK \Phi^q \bK] \\
    &=& -\frac{t^2}{2}\TR[\Phi \bK \Phi \bK] -\frac{1}{2} \sum_{p+q \ge 3} \frac{t^{p+q}}{p!q!} \TR[\Phi^p \bK \Phi^q \bK] \\
    &\le& -\frac{t^2}{2}\TR[\Phi \bK \Phi \bK] + \frac{1}{2} \sum_{l \ge 3} \frac{t^l}{l!} 2^l \|\varphi\|_\infty^l \|\bK\|_\Op \|\bK\|_* \\
    &\le& -\frac{t^2}{2}\TR[\Phi \bK \Phi \bK] 
    + t^3 \|\varphi\|_\infty^3  \|\bK\|_* \|\bK\|_\Op.  
\end{eqnarray*}

For $k\ge 3$, we observe that
\begin{eqnarray*}
    |\TR[(\bG_t \bK)^k ]|  \le  \|(\bG_t \bK)^k \|_* 
    \le \|\bG_t \bK \|_\Op^{k-3} \|\bG_t\|_\Op^3 \|\bK\|_\Op^2 \|\bK\|_* 
    \le \Big (\frac{2}{3}\Big )^{k-3} (2t\|\varphi\|_\infty)^3 \|\bK\|_\Op^2 \|\bK\|_*.
\end{eqnarray*}
Thus
$$\sum_{k\ge 3}{1\over k} |\TR[(\bG_t\bK)^k]| \le 8t^3 \|\varphi\|_\infty^3 \|\bK\|_\Op^2 \|\bK\|_*.$$

Combining all ingredients, we deduce that
\begin{eqnarray*}
    \log \mathbb E[e^{t\Lambda(\varphi)}] &\le& t \mathbb E[\Lambda(\varphi)] + {t^2 \over 2} \Var{}{\Lambda(\varphi)} + t^3 \|\varphi\|_\infty^3 \|\bK\|_* (1+ \|\bK\|_\Op + 8 \|\bK\|_\Op^2) \\
    &\le& t \mathbb E[\Lambda(\varphi)] + {t^2 \over 2} \Var{}{\Lambda(\varphi)} + 10 t^3 \|\varphi\|_\infty^3 \|\bK\|_* M^2.
\end{eqnarray*}

Let $\varepsilon > 0$. We have 
\begin{eqnarray*}
    \log \mathbb P(\Lambda(\varphi) - \mathbb E[\Lambda(\varphi)] \ge \varepsilon ) &\le& \inf_{t} \Big ( \log \mathbb E[e^{t\Lambda(\varphi)}] - t \mathbb E[\Lambda(\varphi)] - t\varepsilon \Big ) \\
    &\le& \inf_t \Big ( - t \varepsilon + {t^2 \over 2} \Var{}{\Lambda(\varphi)} + t^3 \cdot 10 \|\varphi\|_\infty^3 \|\bK\|_* M^2\Big )
\end{eqnarray*}
where the infimum is taken on $t \in (0,1/(3\|\varphi\|_\infty M)]$.

For $0 \le \varepsilon \le {\Var{}{\Lambda(\varphi)}^2 \over  40 \|\varphi\|_\infty^3 \|\bK\|_* M^2}$, we choose
$$t_0 = {\varepsilon \over \Var{}{\Lambda(\varphi)}} 
\le {\Var{}{\Lambda(\varphi)} \over 40 \|\varphi\|_\infty^3 \|\bK\|_* M^2} 
\le \frac{2M \|\varphi\|_\infty^2 \|\bK\|_*}{ 40 \|\varphi\|_\infty^3 \|\bK\|_* M^2}
< {1\over 3 \|\varphi\|_\infty M} \cdot$$
This choice yields
$$\log \mathbb P(\Lambda(\varphi) - \mathbb E[\Lambda(\varphi)] \ge \varepsilon ) \le -{\varepsilon^2 \over 2 \Var{}{\Lambda(\varphi)}} + t_0^3 \cdot  10 \|\varphi\|_\infty^3 \|\bK\|_* M^2.$$
Note that 
$$t_0^3 \cdot  10 \|\varphi\|_\infty^3 \|\bK\|_* M^2 = {\varepsilon^3 \over  \Var{}{\Lambda(\varphi)}^3} \cdot  10 \|\varphi\|_\infty^3 \|\bK\|_* M^2 \le {\varepsilon^2 \over 4 \Var{}{\Lambda(\varphi)}} \cdot $$
This implies
$$\log \mathbb P(\Lambda(\varphi) - \mathbb E[\Lambda(\varphi)] \ge \varepsilon ) \le -{\varepsilon^2 \over 4 \Var{}{\Lambda(\varphi)}} $$
as desired.

\subsection{Proof of Theorem \ref{t-vector}}

\begin{proof}[Proof of Theorem \ref{t-vector}]
By a scaling argument, it suffices to prove for $\omega_1 = \ldots = \omega_p =1$.
We have
\begin{eqnarray*}
    \mathbb P(\|\Lambda(\Phi) - \mathbb E[\Lambda(\Phi)]\|_2 \ge \varepsilon) &=& \mathbb P \Big (\sum_{i=1}^p |\Lambda(\varphi_i) - \mathbb E[\Lambda(\varphi_i)]|^2 \ge \varepsilon^2 \Big ) \\
    &=& \mathbb P \Big (\sum_{i=1}^p |\Lambda(\varphi_i) - \mathbb E[\Lambda(\varphi_i)]|^2 \ge \varepsilon^2 \sum_{i=1}^p \frac{\Var{}{\Lambda(\varphi_i)}}{\| \mathbb V(\Phi) \|_2^2} \Big ) \\
    &\le& \sum_{i=1}^p \mathbb P \Big (  |\Lambda(\varphi_i) - \mathbb E[\Lambda(\varphi_i)]| \ge \varepsilon \frac{\sqrt{\Var{}{\Lambda(\varphi_i)}}}{\|\mathbb V(\Phi) \|_2} \Big ) \cdot
\end{eqnarray*}
For each $1\le i \le p$, applying Theorem \ref{t-con} gives
$$\mathbb P \Big (  |\Lambda(\varphi_i) - \mathbb E[\Lambda(\varphi_i)]| \ge \varepsilon \frac{\sqrt{\Var{}{\Lambda(\varphi_i)}}}{\|\mathbb V(\Phi) \|_2}\Big ) 
\le 2 \exp \Big ( - \frac{\varepsilon^2}{4A \|\mathbb V(\Phi) \|_2^2}\Big ), 
\: \forall 0 \le \varepsilon \le \frac{2A\|\mathbb V(\Phi) \|_2\sqrt{\Var{}{\Lambda(\varphi_i)}}}{3\|\varphi_i\|_\infty} \cdot $$
The theorem follows.
\end{proof}

\subsection{Proof of Theorem \ref{t:core-union}} \label{app:thm4}

Using \eqref{eq:mult-er}, we deduce that
\[ \Big |\frac{ L_{\mathcal S}(f)}{L(f)} -1  \Big | \le \frac{1}{cn} |L_{\mathcal S}(f) - L(f)| , \quad \forall f\in \mathcal F.\]
This implies
\[ \mathbb P \Big (\exists f\in \mathcal F: \Big |\frac{ L_{\mathcal S}(f)}{L(f)} -1  \Big | \ge \varepsilon \Big ) \le \mathbb P \Big ( \exists f \in \mathcal F: \frac{1}{n} | L_{\mathcal S}(f) - L(f)| \ge c\varepsilon\Big ). \]
Thus, it suffices to bound the RHS. For each $f \in \mathcal F$, let $V\ge \Var{}{n^{-1} L_{\mathcal S}(f)}$, we apply Theorem \ref{t-con} for the linear statistic $ L_{\mathcal S}(f) = \Lambda(f/\bK)$ to obtain
\[ \mathbb P \Big ({1\over n}|L_{\mathcal S}(f) - L(f)| \ge c\varepsilon \Big ) \le 2 \exp \Big(-\frac{c^2\varepsilon^2}{4AV} \Big ),\quad \forall 0 \le \varepsilon \le \frac{2AnV}{3c\|f/\bK\|_\infty} 
\cdot\]
Using $\bK(x,x) \ge \rho m/n$, we deduce that the above inequality holds for any $0\le \varepsilon \le \frac{2A\rho mV}{3c\|f\|_\infty}.$
\begin{proof}[Proof of Theorem \ref{t:core-union}: Assuming \eqref{eq:finite-dim}]
We let
$\mathcal F_\sym := \{\lambda f: |\lambda| \le 1, f \in \mathcal F \},$
and let $\mathcal B$ be the convex hull of $\mathcal F_\sym$. Since $\mathcal B$ is a symmetric  convex body in $\overline{\mathcal F}$, there exists a norm $\|\cdot\|_{\mathcal F}$ in $\overline{\mathcal F}$ such that $\mathcal B$ is the unit ball in $(\overline{\mathcal F}, \|\cdot\|_{\mathcal F})$.

Define
$$\mathcal L(f) := \frac{1}{n} \Big (  L_{\mathcal S}(f) - L(f) \Big ), \quad f\in \overline{\mathcal F},$$ 
then it is clear that $\mathcal L(f)$ is linear in $f$.
Moreover, for any $f,g \in \overline{\mathcal F}$, one has
\begin{eqnarray*}
|\mathcal L(f) - \mathcal L(g)| = |\mathcal L(f-g)| 
= \|f-g\|_{\mathcal F} \cdot \Big |\mathcal L \Big ({f-g \over \|f-g\|_{\mathcal F}} \Big )\Big | 
\le \|f-g\|_\mathcal F \sup_{h\in \mathcal B} |\mathcal L(h)|.
\end{eqnarray*}

For each $\delta>0$, let $\mathcal B_\delta$ be a $\delta$-net for $(\mathcal B, \| \cdot \|_{\mathcal F})$. By definition of a $\delta$-net, for any $f\in \mathcal B$, there exists an $f_0 \in \mathcal B_\delta$ such that $\|f-f_0\|_{\mathcal F} \le \delta$. Thus, for every $f\in \mathcal B$
$$|\mathcal L(f)| \le |\mathcal L(f_0)| + |\mathcal L(f) - \mathcal L(f_0)| 
\le |\mathcal L(f_0)| + \delta \sup_{h \in \mathcal B} |\mathcal L(h)| 
\le \sup_{g \in \mathcal B_\delta} |\mathcal L(g)| + \delta \sup_{h \in \mathcal B} |\mathcal L(h)|.$$
This implies 
$\sup_{f\in \mathcal B} |\mathcal L(f)| \le {1\over 1-\delta} \sup_{f\in \mathcal B_\delta} |\mathcal L(f)|, 
\: \forall 0<\delta <1.$
In particular, choosing $\delta = 1/2$ gives
$$\sup_{f\in \mathcal B} |\mathcal L(f)| \le 2 \sup_{f\in \mathcal B_{1/2}} |\mathcal L(f)|.$$ 
Therefore
\begin{eqnarray*}
\mathbb P \Big (\sup_{f\in \mathcal B} |\mathcal L(f)| \ge c\varepsilon \Big ) 
\le \mathbb P \Big ( 2 \sup_{f\in \mathcal B_{1/2}} |\mathcal L(f)| \ge c\varepsilon \Big ) 
= \mathbb P \Big ( \exists f\in \mathcal B_{1/2}: {1\over n} |L_{\mathcal S}(f) - L(f) | \ge c\varepsilon/2\Big ).
\end{eqnarray*}
Let $N(\mathcal B, \|\cdot \|_{\mathcal F}, 1/2)$ be the $1/2$-covering number of $\mathcal B$, then 
\begin{eqnarray*}
\mathbb P\Big ( \exists f \in \mathcal B: \frac{1}{n}| \hat L_{\mathcal S}(f) - L(f)| \ge c\varepsilon \Big )
\le N(\mathcal B, \|\cdot \|_{\mathcal F}, 1/2) \cdot 2 e^{-c^2\varepsilon^2 / 16AV},
\end{eqnarray*}
for any
$$V\ge \sup_{f\in \mathcal B} \Var{}{\frac{1}{n} L_{\mathcal S}(f)} 
\quad,\quad
0 \le \varepsilon \le  {4A\rho mV \over 3c\sup_{f\in \mathcal B}\|f\|_\infty} \cdot$$
Note that for a finite dimensional normed vector space, for $0<\delta <1$, one has 
$$N(\mathcal B, \|\cdot \|_{\mathcal F}, \delta) \le \Big (\frac{3}{\delta}\Big )^{\dim \overline{\mathcal F}}.$$ 
This implies 
\begin{equation} \label{eq:union-1}
   \mathbb P\Big ( \exists f \in \mathcal B: \frac{1}{n}| L_{\mathcal S}(f) - L(f) | \ge c\varepsilon \Big )
\le 2 \exp \Big (6D - {c^2 \varepsilon^2 \over 16 AV} \Big )  \cdot 
\end{equation}

Since $\mathcal F \subset \mathcal B$, it is clear that 
\begin{equation} \label{eq:union-2}
    \mathbb P\Big ( \exists f \in \mathcal F:  | L_{\mathcal S}(f) - L(f) | \ge nc\varepsilon \Big ) 
\le 
\mathbb P\Big ( \exists f \in \mathcal B: | L_{\mathcal S}(f) - L(f) | \ge nc\varepsilon \Big ).
\end{equation}

On the other hand, for each $f\in \mathcal B$, there exist $0\le t \le 1, |\lambda_i| \le 1, f_i \in \mathcal F, i=1,2$ such that 
$f = t \lambda_1 f_1 + (1-t) \lambda_2 f_2.$
Therefore
\begin{eqnarray*}
\Var{}{  L_{\mathcal S}(f)}^{1/2} &=& 
\Var{}{ L_{\mathcal S}(t\lambda_1 f_1) +   L_{\mathcal S}((1-t)\lambda_2 f_2)}^{1/2} \\
&\le& \Var{}{ L_{\mathcal S}(t\lambda_1 f_1)}^{1/2} + 
 \Var{}{ L_{\mathcal S}((1-t)\lambda_2 f_2)}^{1/2} \\
&\le& t \Var{}{ L_{\mathcal S}(f_1)}^{1/2} + (1-t) \Var{}{ L_{\mathcal S}(f_2)}^{1/2} \\
&\le& \sup_{g\in \mathcal F} \Var{}{ L_{\mathcal S}(g)}^{1/2}.
\end{eqnarray*}
 Moreover,
$$\|f\|_\infty = \sup_{x\in \mathcal X} |f(x)| \le t\sup_{x\in \mathcal X}  |f_1(x)| + (1-t) \sup_{x\in \mathcal X}|f_2(x)| \le \sup_{g\in \mathcal F} \|g\|_\infty.$$
Thus,
\begin{equation} \label{eq:union-3}
    \sup_{f\in \mathcal B}\Var{}{  L_{\mathcal S}(f)} = \sup_{f\in \mathcal F} \Var{}{ L_{\mathcal S}(f)} \quad,\quad
    \sup_{f\in \mathcal B} \|f\|_\infty = \sup_{f\in \mathcal F} \|f\|_\infty.
\end{equation}
From \eqref{eq:union-1}, \eqref{eq:union-2}, \eqref{eq:union-3}, the theorem follows.
\end{proof}

\begin{proof}[Proof of Theorem \ref{t:core-union}: Assuming \eqref{eq:parametrize}, \eqref{eq:lipschitz}]
    We define $\mathcal L(\theta):= {1\over n} (L_{\mathcal S}(f_\theta) - L(f_\theta)), \theta \in \Theta$. Then
    \[  \mathbb P \Big ( \exists f \in \mathcal F: \frac{1}{n} | L_{\mathcal S}(f) - L(f)| \ge c\varepsilon\Big ) = \mathbb P (\exists \: \theta \in \Theta: |\mathcal L(\theta) | \ge c\varepsilon) = \mathbb P (\sup_{\theta \in \Theta} |\mathcal L(\theta) | \ge c\varepsilon). \]
    Using \eqref{eq:lipschitz}, we have $|L(f_{\theta}) - L(f_{\theta'})| \le n\ell \|\theta -\theta'\|$ and
    \begin{equation} \label{eq:cardinality}
        |L_{\mathcal S}(f_{\theta}) - L_{\mathcal S}(f_{\theta'})| \le \ell \|\theta - \theta'\| \Big (\sum_{x\in \mathcal S}\frac{1}{\bK(x,x)} \Big) \le \ell \|\theta-\theta'\| n \rho^{-1} m^{-1}|\mathcal S|.
    \end{equation}
    This implies
   $ |\mathcal L(\theta) - \mathcal L(\theta')| \le C \|\theta - \theta'\|$ a.s., 
   for some constant $C$ depending on $B,\rho,\ell$. Let $\Gamma$ be a ${c\varepsilon\over 2C}$-net for $\Theta$, then 
   \[\sup_{\theta\in \Theta} |\mathcal L(\theta)| \le \sup_{\theta' \in \Gamma} |\mathcal L(\theta')| + \frac{c\varepsilon}{2} \cdot \]
   Thus
   \[\mathbb P (\sup_{\theta \in \Theta} |\mathcal L(\theta) | \ge c\varepsilon) \le \mathbb P (\sup_{\theta' \in \Gamma} |\mathcal L(\theta') | \ge c\varepsilon/2)\]
   We note that $|\Gamma| = O(\varepsilon^{-D})$. This completes the proof.
\end{proof}

\begin{remark} \label{rmk:card}
    Without the assumption $|\mathcal S|\le B\cdot m$ a.s., one can continue from \eqref{eq:cardinality} as follows. Denote by $\lambda_1 \ge \ldots \ge \lambda_n \ge 0$ the eigenvalues of $\bK$, it is known that $|\mathcal S| =^d X_1 + \ldots + X_n$, where $X_i \sim Ber(\lambda_i)$ are independent. Let $B>0$, then using a multiplicative Chernoff bound for the sum of independent Bernoulli variables gives
    \[\mathbb P(|\mathcal S| > (B+1)m ) = \mathbb P \Big (\sum_{i=1}^n X_i > (B+1)m \Big ) \le \exp \Big (-\frac{B^2}{B+2}m \Big ) \cdot\]
    By choosing $B$ large, this event will have small probability. Meanwhile, on the event $\{|\mathcal S| \le (B+1)m \}$, we can use exactly the same argument as in the proof above. 
\end{remark}

\subsection{Proof of Theorem \ref{t:coreset-vectorvalued}}

\begin{proof}[Proof of Theorem \ref{t:coreset-vectorvalued}]
    It suffices to show for the case $\omega_1 = \ldots = \omega_p =1$. For each $\mathbf{f} \in \mathcal F$, by applying Theorem \ref{t-con} and an union bound argument, we have
    \[\mathbb P \Big (\frac{1}{n} \| L_{\mathcal S}(\mathbf{f}) - L(\mathbf{f}) \| \ge \varepsilon \Big ) \le 2p \exp \Big (-\frac{c^2\varepsilon^2}{4AV} \Big ), \quad \forall 0 \le \varepsilon \le \frac{2A\rho m V}{3c \max_{i} \|f_i\|_\infty} \cdot \]
    Using the same argument as in the proof of Theorem \ref{t:core-union} under assumption \eqref{eq:finite-dim} gives the result.
\end{proof}

\subsection{Proof of Theorem \ref{c:ope-core}} \label{app:thm6}

\begin{proof}[Proof of Theorem \ref{c:ope-core}]
    We remark that $\Var{}{n^{-1}L_{\mathcal S}(f)} = \mathcal O(m^{-(1+1/d)})$ uniformly for all $f\in \mathcal F$, w.h.p. in the data set $\mathcal X$. Hence, Theorem \ref{c:ope-core} is a direct application of Theorem \ref{t:core-union} with $V= Cm^{-(1+1/d)}$ for some constant $C>0$. As we discussed in the Remark \ref{rmk:rate}, the range for $\varepsilon$ is $\mathcal O(m^{-1/d})$. Thus,
    it suffices to check the conditions on $\hat \bK$. Since $\hat \bK$ is a projection of rank $m$, $|\mathcal S| = m$ a.s. Moreover, we have $n \hat \bK(x,x)  = \bK(x,x)$, which is typically of order $m$,
    where we used an uniform CLT result and an asymptotic for multivariate OPE kernels (see \cite{bardenet2021sgddpp} for more details). 
    
\end{proof}

\medskip

\noindent
  \textbf{Acknowledgments and Disclosure of Funding.}  RB and HSO acknowledge support from ERC grant Blackjack (ERC-2019-STG-851866) and ANR AI chair Baccarat (ANR-20-CHIA-0002). SG was supported in part by the MOE grants R-146-000-250-133, R-146-000-312-114, A-8002014-00-00 and MOE-T2EP20121-0013. HST was supported by the NUS Research Scholarship. 
  
\bibliographystyle{plain}
\bibliography{biblio}
\end{document}